\documentclass{article} 
\usepackage{iclr2027_conference,times}

\usepackage{amsmath,amsfonts,bm}

\def\eqref#1{equation~\ref{#1}}

\def\1{\bm{1}}

\DeclareMathAlphabet{\mathsfit}{\encodingdefault}{\sfdefault}{m}{sl}
\SetMathAlphabet{\mathsfit}{bold}{\encodingdefault}{\sfdefault}{bx}{n}

\usepackage{amsthm}
\usepackage{thmtools}
\usepackage{thm-restate}
\usepackage[most]{tcolorbox}

\theoremstyle{plain}

\tcolorboxenvironment{restatable}{
  enhanced,
  breakable,
  colback=blue!3,
  colframe=gray!80,
  boxrule=1pt,
  arc=0pt,
  left=1.0mm,
  right=1.0mm,
  top=1.0mm,
  bottom=1.0mm,
  before skip=8pt,
  after skip=8pt
}

\usepackage{changepage} 
\usepackage{algorithm}
\usepackage{algorithmic}
\usepackage{capt-of}

\usepackage[utf8]{inputenc}
\usepackage{caption}
\usepackage{multirow}
\usepackage{subcaption}
\usepackage{pbox}
\usepackage{graphicx}
\usepackage{bm}
\usepackage{siunitx}
\usepackage{lipsum}
\usepackage{mathtools}
\usepackage[table]{xcolor}
\usepackage{adjustbox}
\usepackage{wrapfig}
\usepackage{stfloats}
\usepackage{amssymb}

\usepackage{array}
\usepackage{booktabs}
\usepackage{colortbl}
\newcolumntype{C}{>{\centering\arraybackslash}p{1.15cm}}

\usepackage{tabularx}
\usepackage{arydshln}
\usepackage{nicefrac}       
\usepackage{microtype}      
\usepackage{xcolor}         
\usepackage{paralist}
\usepackage[table]{xcolor}

\usepackage{newfloat}
\usepackage{listings}

\usepackage{hyperref}
\usepackage{url}

\title{Looped Reasoning Finishes Earlier\\ Than You Think}

\author{%
    Sumin Park\textsuperscript{\textnormal{1,2,3}},
    T. Konstantin Rusch\textsuperscript{\textnormal{1,2,3,4}\thanks{\scriptsize Correspondence to \href{mailto:tkrusch@tue.ellis.eu}{\texttt{tkrusch@tue.ellis.eu}}}~}\\
    {\small \textsuperscript{1}ELLIS Institute Tübingen,}
    {\small \textsuperscript{2}Max Planck Institute for Intelligent Systems,}\\ 
    {\small \textsuperscript{3}Tübingen AI Center,}
    {\small \textsuperscript{4}Liquid AI}
}

\iclrfinalcopy

\begin{document}

\maketitle

\lhead{Looped Reasoning Finishes Earlier Than You Think}

\vspace{-1em}
\begin{abstract}
Looped reasoning with dynamic depth can be viewed as relaxation toward solution attractors in a learned dynamical landscape. In practice, however, latent trajectories often oscillate, cycle, or drift rather than cleanly converge to stable fixed points. How these non-stationary dynamics contribute to reasoning and constrain convergence remains unclear. We show that slowly decaying oscillatory modes can impose their convergence rate on coupled latent dynamics, creating a bottleneck. Yet damping these modes indiscriminately can disrupt the reasoning computation they support. This tension motivates an architectural approach: introducing rotational degrees of freedom along which symmetry-generated rotational motion leaves the reasoning-relevant quotient dynamics unchanged by construction. We propose \textbf{QRR} (\textbf{Q}uotient \textbf{R}ecurrent \textbf{R}easoner), a gauge-equivariant recurrent architecture with an exact per-token phase-rotation symmetry. Motion along these symmetry directions leaves the quotient representation unchanged and is dynamically decoupled from reasoning-relevant dynamics. Empirically, QRR achieves stable and faster convergence to solution roots in quotient space, without forming a dominant slow oscillatory bottleneck, while matching or surpassing recent reasoning models on Sudoku, Maze, and Mini-ARC. Code is available at \href{https://github.com/camail-official/Quotient_Recurrent_Reasoner_QRR}{here}.
\end{abstract}

\section{Introduction}
\label{sec:intro}

Looped reasoners trade architectural depth for recurrent computation~\citep{dehghani2019universaltransformers,saunshi2025reasoninglatentthoughts,fan2025loopedtransformerslengthgeneralization}. A shared neural block is applied repeatedly to refine an internal state, allowing additional computation without increasing parameter count~\citep{NEURIPS2022_7f70331d,geiping2025scalingtesttimecomputelatent,wang2025hierarchicalreasoningmodel,jolicoeurmartineau2025morerecursivereasoningtiny}. Recent works make the dynamical interpretation explicit, viewing reasoning as the evolution of a latent state converging toward solution-aligned equilibria~\citep{blayney2026mechanistic,kim2026dynamicalphaseselectioncontrols}. Fixed-Point Reasoning Model (FPRM) uses convergence of the recurrent state to a fixed point as an intrinsic stopping criterion, Attractor Models define the prediction itself as the equilibrium of a learned refinement process, and Equilibrium Reasoners (EqR) explicitly learns task-conditioned attractors whose stable fixed points correspond to valid solutions, using multiple initializations to explore their attraction  basins~\citep{movahedi2026fixedpointreasonersstableadaptive,feinashley2026solveloopattractormodels,huang2026equilibriumreasonerslearningattractors}. In this view, reasoning is not just a sequence of refinements but a recurrent trajectory through the learned dynamical landscape.

However, viewing recurrent computation as convergence to a stable attractor does not capture the full range of dynamics it can exhibit. Trajectories can instead exhibit orbits, drifts, cyclic limiting behavior, and other structured forms of non-stationarity, rather than simply relaxing toward a single fixed point~\citep{geiping2025scalingtesttimecomputelatent,pappone2025twoscalelatentdynamicsrecurrentdepth,blayney2026mechanistic,kim2026dynamicalphaseselectioncontrols,yang2026stabilizingrecurrentdynamicstesttime}.
These studies reveal a wide range of latent geometries in recurrent models, yet how these different forms of oscillatory motion affect the underlying latent computation remains unexplored. 

The key distinction is therefore not the geometry of the motion itself, but whether and how it influences the recurrent computation. Around a solution root $z^\star$, where the recurrence admits a fixed linearization with the Jacobian $J=Df(z^\star)$, we establish a local coupling mechanism by separating an oscillatory spectral coordinate from the complementary latent dynamics and quantifying how the former drives the latter. When the oscillatory mode decays more slowly, this coupling transfers its rate to the complementary dynamics and delays settlement. We then test this mechanism causally in the original nonlinear recurrences of FPRM and HRM~\citep{wang2025hierarchicalreasoningmodel} in Section~\ref{sec:osc-coupling-empirical}. Across distinct oscillatory geometries, an initial perturbation of the identified mode produces latent responses and settling-time shifts that closely match the predictions of the local coupling model. This agreement directly links the convergence delay near solution roots to the coupled slow oscillatory mode.

\begin{wrapfigure}{r}{0.61\textwidth}
    \centering
    \scriptsize
    \vspace{-1.9em}
    \includegraphics[width=\linewidth]{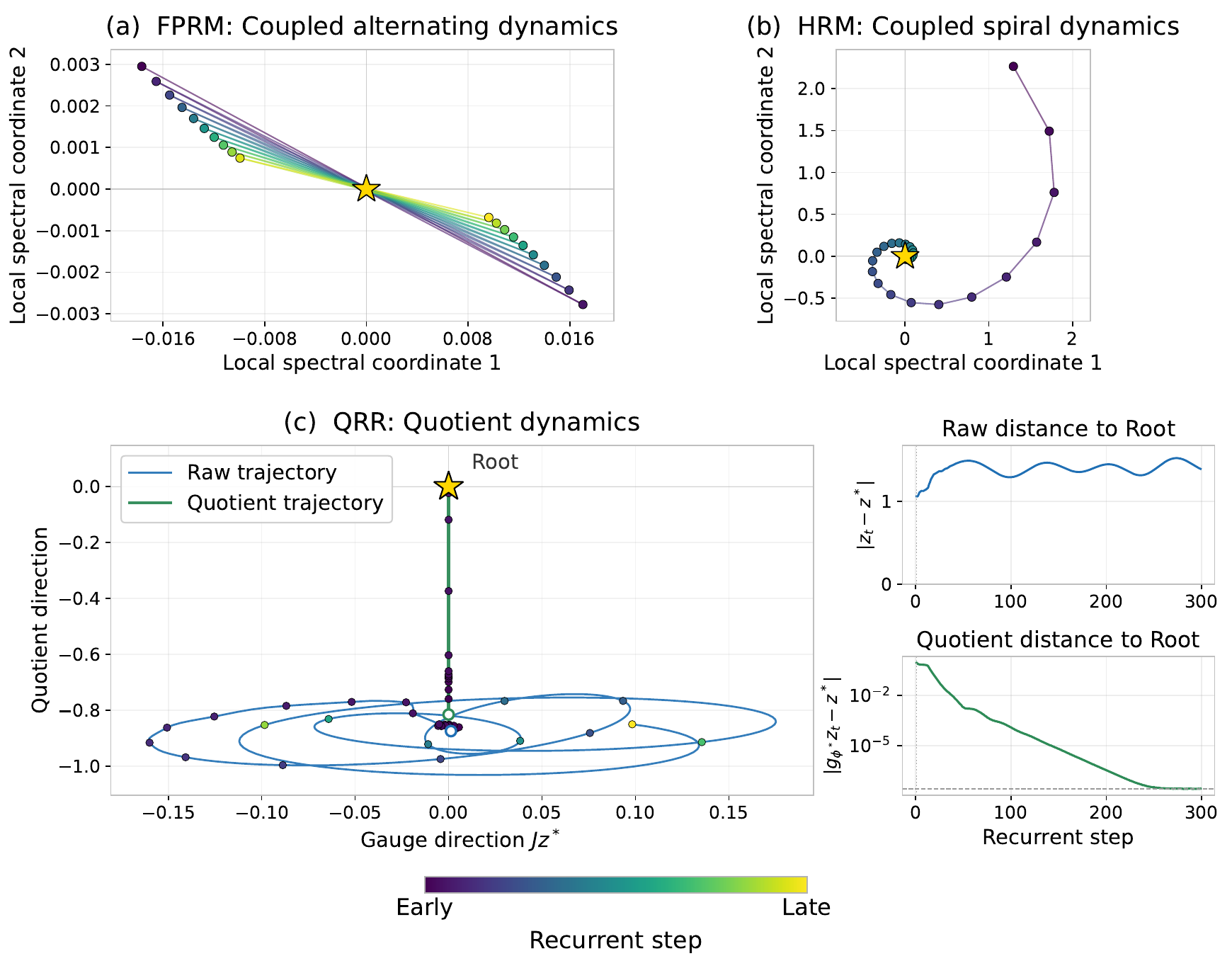}
    \vspace{-1.6em}
    \caption{
    \textbf{Oscillatory geometry and quotient convergence.}
    FPRM and HRM exhibit distinct alternating and spiral dynamics near solution roots. In QRR, raw motion persists while quotient trajectory contracts.
    }
    \label{fig:teaser}
    \vspace{-1.5em}
\end{wrapfigure}
Oscillatory appearance alone, however, does not reveal which motion can be safely removed. In FPRM, suppressing the alternating mode before semantic stabilization degrades task performance, showing that slow oscillatory motion can still support reasoning (Appendix~\ref{app:fprm-osc-suppression}). 
Thus, rather than identifying and damping particular oscillations, we instead redesign what counts as convergence by introducing a controlled representational redundancy: motion along rotational symmetry directions is irrelevant to the underlying quotient state by construction. We propose \textbf{QRR}, the \textbf{Q}uotient \textbf{R}ecurrent \textbf{R}easoner, which imposes an exact per-token rotational symmetry on the recurrent state, inducing a quotient representation invariant to this motion. Gauge-equivariant updates preserve this equivalence, allowing symmetry-generated motion to persist in the raw trajectory without driving or limiting the convergence of quotient dynamics.

As in Figure~\ref{fig:teaser}, FPRM and HRM exhibit distinct oscillatory geometries whereas QRR retains raw phase motion while its quotient trajectory converges. Oscillatory modes outside the gauge subspace remain free to participate in reasoning. Empirically, QRR also retains oscillatory modes in quotient space but does not develop the isolated dominant slow mode as in FPRM and HRM (Section~\ref{sec:spectral_bridge}).

\vspace{-1em}
\section{Background}
\vspace{-0.5em}

\subsection{Looped reasoning models}
\label{sec:bg-looped-reasoners}

\vspace{-0.5em}
Looped reasoning models increase effective computational depth by repeatedly applying
shared neural blocks to a latent state, allowing additional test-time computation without
increasing parameter count. Recent approaches can be broadly grouped by how they organize
this recurrent computation. General recurrent-depth models iterate a shared Transformer
block and scale reasoning by increasing the number of latent recurrences at inference
\citep{geiping2025scalingtesttimecomputelatent}. Puzzle-oriented recursive reasoners instead structure
the recurrence explicitly. HRM couples high- and low-level recurrent modules operating at
different frequencies~\citep{wang2025hierarchicalreasoningmodel}, whereas Tiny Reasoning Model (TRM) simplifies
this hierarchy to a single small network that repeatedly refines a latent reasoning state
and the current answer~\citep{jolicoeurmartineau2025morerecursivereasoningtiny}.
A complementary line organizes recurrent computation around equilibria or attractors:
FPRM uses convergence to a fixed point as an adaptive halting criterion
\citep{movahedi2026fixedpointreasonersstableadaptive}, EqR learns task-conditioned
attractors whose stable equilibria correspond to solutions
\citep{huang2026equilibriumreasonerslearningattractors}, and Attractor Models explicitly solve a learned
fixed-point refinement with implicit differentiation
\citep{feinashley2026solveloopattractormodels}.

\vspace{-1em}
\subsection{Gauge symmetry and quotient dynamics}
\label{sec:bg-gauge-quotient}
\vspace{-0.5em}

Gauge symmetry formalizes a local representational redundancy where changing the local
reference frame changes the coordinates used to represent a state, but not the
underlying object itself~\citep{pmlr-v97-cohen19d,bronstein2021geometric,gerken2023geometric}.
If a map $T$ is equivariant under the action of a group $G$, gauge-related states
$z\sim g\!\cdot\!z$ evolve consistently and can be identified as a single equivalence
class $[z]$ in the quotient space $\mathcal Z/G$~\citep{kondor2018generalization,cohen2019general}. The quotient collapses symmetry-related representations into a single state, so motion
along the symmetry orbit is ignored while all remaining state changes are preserved. This distinction also changes the notion of convergence. An equivariant system may continue moving along a symmetry orbit without changing its
quotient state. Such motion is captured by a {relative equilibrium},
$T(z^\star)=g^\star\!\cdot\!z^\star$, which is non-stationary in the original coordinates
but becomes a fixed point after quotienting, $[T(z^\star)]=[z^\star]$
\citep{krupa1990bifurcations,marsden1999introduction}.
In neural architectures, gauge equivariance provides a principled way to implement this invariance to local reference frames~\citep{weiler2021coordinate,wiersma2020cnns}.
Gauge-equivariant convolutions extend conventional equivariance from global transformations
to local gauge choices on manifolds~\citep{pmlr-v97-cohen19d}, and Gauge Equivariant Mesh
CNNs combine anisotropic convolution with parallel transport to communicate features
in different local frames~\citep{haan2021gauge}. The same principle has also been extended to attention, where the Gauge Equivariant Transformer constructs gauge-equivariant self-attention by comparing features across local coordinate systems after transporting them to a common frame~\citep{NEURIPS2021_e57c6b95}.

\vspace{-0.5em}
\subsection{Oscillatory and non-stationary dynamics in recurrent reasoning}
\label{sec:bg-oscillatory-dynamics}
\vspace{-0.3em}

Recent depth-recurrent language models exhibit structured latent trajectories that are substantially richer than monotone convergence to a fixed point, including multi-dimensional orbits and approximately one-dimensional drifts, particularly on tokens requiring extended computation
\citep{geiping2025scalingtesttimecomputelatent,pappone2025twoscalelatentdynamicsrecurrentdepth,kim2026dynamicalphaseselectioncontrols}.
Other mechanistic studies find cyclic fixed-point trajectories, where recurrent layers repeatedly trace a stable cycle in latent space, as well as persistent non-stationarity even after successive state differences become small
\citep{huang2026equilibriumreasonerslearningattractors,blayney2026mechanistic}.
Oscillatory behavior also appears in recurrent reasoners based on solution equilibria. FPRM, for example, reports oscillation around solution fixed points and introduces damping to stabilize these trajectories
\citep{movahedi2026fixedpointreasonersstableadaptive}.
More broadly, looped Transformers with the same architecture, objective, and task performance can settle into different dynamical phases, including bounded non-stationary motion generated by complex-conjugate Jacobian modes
\citep{kim2026dynamicalphaseselectioncontrols}.

What remains unclear is the dynamical role of this motion.
An observed orbit or alternating trajectory may support recurrent computation, or instead reflect a slowly decaying mode that unnecessarily prolongs latent
settlement after the remaining dynamics already converged.

\vspace{-0.8em}
\section{Rate-Limiting Slow Oscillatory Modes in Recurrent Reasoners}~\label{sec:oscillatory_coupling}
\vspace{-2.2em}

Motivated by oscillatory structures observed in recurrent reasoners, we study whether an
identified oscillatory mode can become rate-limiting for latent convergence.
Around solution roots, the nonlinear recurrence admits a stationary first-order linearization, allowing us to isolate an oscillatory spectral component from its
complementary latent dynamics.

We use a Sylvester decomposition as an analytic attribution tool to separate the latent state dynamics into its intrinsic evolution and the response induced by coupling
with the identified spectral coordinate.
This provides a direct way to quantify how the identified coordinate influences the remaining latent dynamics and their convergence behavior.
The decomposition itself is general
to any selected mode, particular oscillatory structure enters through local Jacobian block identified from the trained model.

\vspace{-0.7em}
\subsection{Local spectral decomposition and rate dominance}
\label{sec:theory}
\vspace{-0.5em}

We analyze how an oscillatory component of the local dynamics affects the
remaining latent state near a solution root. Let $z_{t+1}=f(z_t)$ denote the trained
nonlinear recurrence, and let $z^\star$ be a solution root satisfying
$f(z^\star)=z^\star$. For a nearby trajectory state, write the root-centered displacement $\delta_t=z_t-z^\star$. Since $z^\star$ is stationary, the same local Jacobian
governs subsequent steps as
\begin{equation}
\delta_{t+1}
=
J\delta_t+O(\|\delta_t\|^2),
\qquad
J=Df(z^\star).
\label{eq:empirical-local-linearization}
\end{equation}
This stationary linearization allows us to identify and track a fixed spectral component over recurrence.
Consider an oscillatory spectral component of $J$, and let
$U\in\mathbb R^{D\times m}$ be an orthonormal basis for the corresponding
left-invariant subspace of $J$, and let $S$ denote the restriction of $J$
to this subspace. Equivalently,
$J^\top U=US^\top$, with $S=U^\top J U$. Hence, defining a coordinate 
$u_t:=U^\top\delta_t$, the local dynamics satisfy
\vspace{-0.2em}
\begin{equation}
u_{t+1}=U^\top J\delta_t+O(\|\delta_t\|^2)
=Su_t+O(\|\delta_t\|^2).
\end{equation}
Thus $u_t$ is an autonomous spectral coordinate to first order, its next state update depends only on itself through the reduced Jacobian block $S$.
We represent the remaining displacement by the orthogonal residual $e_t$, 
\vspace{-0.2em}
\begin{equation}
u_t=U^\top\delta_t,
\qquad
e_t=Q\delta_t,
\qquad
Q=I-UU^\top,
\label{eq:local-coordinates}
\end{equation}
so that $\delta_t=Uu_t+e_t$ and $U^\top e_t=0$. Projecting the linearized
dynamics onto these two coordinates, $u_t$ and $e_t$, gives
(see full derivation in Appendix~\ref{app:localblock})
\begin{equation}
u_{t+1}=Su_t,
\qquad
e_{t+1}=Ae_t+Bu_t,
\qquad
A:=QJQ,
\quad
B:=QJU.
\label{eq:local-block-dynamics}
\end{equation}
Here $A$ governs the intrinsic evolution of the complementary state, whereas $B$
measures how the isolated spectral coordinate drives that complement.
The observed $e_t$ therefore mixes two effects: its intrinsic transient under
$A$ and the response transmitted from $u_t$ coupled by $B$. We separate these two distinct contributions to evolve $e_t$ using the following Sylvester decomposition.
\begin{restatable}[Exact Sylvester decomposition of a coupled spectral coordinate]{lemma}{coupledresponse}
\label{lem:coupled-response}
Consider the linear dynamics in Eq.~\ref{eq:local-block-dynamics}, with
$A\in\mathbb R^{d\times d}$, $S\in\mathbb R^{m\times m}$, and
$B\in\mathbb R^{d\times m}$, and assume
$\sigma(A)\cap\sigma(S)=\emptyset$.
Then there exists a unique matrix $P\in\mathbb R^{d\times m}$ satisfying $PS-AP=B$.
The complementary state admits the exact decomposition
\vspace{-0.3em}
\begin{equation}
e_t
=
A^t(e_0-Pu_0)+PS^tu_0.
\label{eq:exact-decomposition}
\end{equation}
\end{restatable}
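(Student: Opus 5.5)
The proof has two parts: the Sylvester solvability claim, and a change of variables that removes the forcing term from the $e_t$ recursion.

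\textbf{Existence and uniqueness of $P$.} Consider the linear operator $\mathcal T:\mathbb R^{d\times m}\to\mathbb R^{d\times m}$, $\mathcal T(X)=XS-AX$. Vectorizing gives $\mathrm{vec}(\mathcal T(X))=(S^\top\otimes I_d - I_m\otimes A)\,\mathrm{vec}(X)$. The eigenvalues of this Kronecker sum are exactly $\mu-\lambda$ with $\mu\in\sigma(S)$ and $\lambda\in\sigma(A)$. To see this, triangularize $S^\top$ and $A$ over $\mathbb C$ (Schur form); the Kronecker sum is then similar to a triangular matrix with diagonal entries $\mu_j-\lambda_i$.

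The hypothesis $\sigma(A)\cap\sigma(S)=\emptyset$ makes all of these nonzero. So $\mathcal T$ is invertible as a real linear map on $\mathbb R^{d\times m}$, and $P=\mathcal T^{-1}(B)$ is the unique real solution of $PS-AP=B$. Realness holds automatically because $\mathcal T$ and $B$ are real.

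\textbf{Exact decomposition.} Define the shifted residual $w_t:=e_t-Pu_t$. Using $u_{t+1}=Su_t$ and $e_{t+1}=Ae_t+Bu_t$ from Eq.~\ref{eq:local-block-dynamics}, compute
\[
w_{t+1}=Ae_t+Bu_t-PSu_t=Ae_t-(PS-B)u_t=Ae_t-APu_t=Aw_t .
\]
The third equality uses the Sylvester identity $PS-B=AP$. By induction, $w_t=A^t w_0=A^t(e_0-Pu_0)$. Since $u_t=S^tu_0$, we get $e_t=w_t+Pu_t=A^t(e_0-Pu_0)+PS^tu_0$, which is Eq.~\ref{eq:exact-decomposition}.

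\textbf{Main obstacle.} Once the substitution $w_t=e_t-Pu_t$ is chosen, the recursion step is routine. The only step needing care is the spectral characterization of the Sylvester operator and the check that the complex eigenvalue argument gives a real solution. I will settle both by citing the standard Kronecker-sum eigenvalue fact, or equivalently Sylvester's theorem.

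Finally, the lemma concerns the exact linear recursion of Eq.~\ref{eq:local-block-dynamics}. I will state explicitly that no $O(\|\delta_t\|^2)$ terms enter this argument.
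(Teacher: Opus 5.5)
Your proposal is correct and follows the paper's proof essentially step for step: you show the vectorized Sylvester operator $S^\top\otimes I_d-I_m\otimes A$ is invertible because its eigenvalues are $\mu-\lambda\neq0$, and then you substitute $w_t=e_t-Pu_t$ to obtain $w_{t+1}=Aw_t$. Your Schur-form justification of the eigenvalue claim and your remark that $P$ is real because $\mathcal T$ and $B$ are real fill in details the paper leaves implicit.
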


Lemma~\ref{lem:coupled-response} separates the complementary state $e_t$ into two subsystems.
The first term, $A^t(e_0-Pu_0)$, is its intrinsic transient, whereas
$PS^tu_0$ is the response transmitted from the coupled spectral coordinate $u_t$.  
The lemma is not specific to oscillatory dynamics, it provides a general
framework for attributing the effect of one autonomous spectral component on the rest of a coupled linear system.

\vspace{-0.7em}
\paragraph{Slow oscillatory specialization.}
We now specialize this decomposition to the reduced block $S=U^\top J U$ observed in the actual recurrent reasoners, where the isolated mode is oscillatory and decays
more slowly than the complementary dynamics. When the measured $S$ has
a complex-conjugate pair with modulus $r$ and frequency $\omega$, its real
two-dimensional dynamics admit
\begin{equation}
\sigma(S)=\{re^{\pm i\omega}\},
\qquad
S=M(rQ_\omega)M^{-1},
\qquad
0<\omega<\pi,
\label{eq:osc-rotation}
\end{equation}
where $Q_\omega$ is the planar rotation through $\omega$ and $M$ is real and
invertible.
When $\rho(A)<r$, the intrinsic transient in
Eq.~\ref{eq:exact-decomposition} decays faster than the coupled response
$PS^tu_0$.
For a trajectory $e=(e_t)_{t\geq0}$, define
\begin{equation}
\tau_\delta(e)=\inf\{t\in\mathbb N_0:\|e_k\|_2\leq\delta\ \text{for all }k\geq t\},
\qquad
\mathcal E(e)=\inf_{c\in\mathbb R^d}\limsup_{t\to\infty}\|e_t-c\|_2.
\label{eq:osc-settlement}
\end{equation}
\vspace{-0.7em}
\begin{restatable}[Rate inheritance from a slow oscillatory mode]{theorem}{oscillatorycoupling}
\label{thm:oscillatory-coupling}
Under the setting of Lemma~\ref{lem:coupled-response}, suppose $S$ satisfies
Eq.~\ref{eq:osc-rotation}, with $\rho(A)<r\leq1$, $B\neq0$, and $u_0\neq0$.
If $r<1$, then $e_t\to0$,
\vspace{-0.5em}
\begin{equation}
\limsup_{t\to\infty}\|e_t\|_2^{1/t}=r>\rho(A),
\label{eq:osc-inherited-rate}
\end{equation}
and
\vspace{-0.5em}
\begin{equation}
\tau_\delta(e)=\frac{\log(1/\delta)}{-\log r}+O(1),\qquad \delta\downarrow0.
\label{eq:osc-settling-asymptotic}
\end{equation}
If $r=1$, then $e_t$ remains bounded but does not converge to any constant,
with
\begin{equation}
\mathcal E(e)\geq\frac{\|u_0\|_2\|B\|_F}{\sqrt{2}\,\kappa_2(M)(1+\|A\|_2)}>0.
\label{eq:osc-persistent-floor}
\end{equation}
\end{restatable}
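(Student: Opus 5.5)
The plan is to work entirely from the exact formula of Lemma~\ref{lem:coupled-response}, $e_t=A^t w+y_t$ with $w:=e_0-Pu_0$ and $y_t:=PS^tu_0$. I will treat the intrinsic term $A^tw$ as a perturbation and read all three claims off the forced response $y_t$. In the real two-dimensional setting of Eq.~\ref{eq:osc-rotation} ($m=2$), write $S^t=r^tMQ_{t\omega}M^{-1}$ and $a:=M^{-1}u_0\neq0$.

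\textbf{Case $r<1$.} For the upper bound, $\|y_t\|_2\le \|P\|_2\kappa_2(M)\|u_0\|_2\,r^t$. Choosing $\varepsilon>0$ with $\rho(A)+\varepsilon<r$, Gelfand's formula gives $\|A^tw\|\le C_\varepsilon(\rho(A)+\varepsilon)^t=o(r^t)$. Hence $e_t\to0$ and $\limsup_{t\to\infty}\|e_t\|_2^{1/t}\le r$.

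For the matching lower bound I first note that $P\neq0$, since $PS-AP=B\neq0$. The point I expect to be the main obstacle is that $y_t$ does not vanish along a subsequence. I will argue as follows. Because $P\neq0$ acts on $\mathbb R^2$, its kernel $K$ has dimension at most one. The vectors $S^tu_0$ and $S^{t+1}u_0=S(S^tu_0)$ cannot both lie in the line $K$, since that would make $K$ an invariant real line of $S$, while $S$ has no real eigenvalues because $0<\omega<\pi$. By compactness of the unit circle I then get a constant $c>0$ with
\[
\max\bigl(\|PMQ_\theta a\|_2,\;\|PMQ_{\theta+\omega}a\|_2\bigr)\ge c\quad\text{for all }\theta,
\]
so for every $t$ one of $\|y_t\|,\|y_{t+1}\|$ is at least $c\,r^{t+1}$. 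Combined with $A^tw=o(r^t)$, this yields $\limsup_{t\to\infty}\|e_t\|^{1/t}=r$.

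For the settling time, the upper envelope $\|e_k\|\le C'r^k$ for $k\ge t_0$ gives $\tau_\delta\le \log(C'/\delta)/(-\log r)+O(1)$. For the reverse inequality, the two-step lower bound shows that whenever $c\,r^{t+1}/2>\delta$ some $k\in\{t,t+1\}$ has $\|e_k\|>\delta$ once $t$ is large. This gives $\tau_\delta\ge \log(c/(2\delta))/(-\log r)-O(1)$, and both bounds together give Eq.~\ref{eq:osc-settling-asymptotic}.

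\textbf{Case $r=1$.} Boundedness is immediate, because $\rho(A)<1$ forces $A^tw\to0$ and $\|y_t\|\le\|P\|\kappa_2(M)\|u_0\|$. For the floor, I avoid $\|P\|$ and use the Sylvester identity directly. Setting $v_t:=S^tu_0$, the identity gives $y_{t+1}-Ay_t=Bv_t$. For any constant $c$ this becomes
\[
(y_{t+1}-c)-A(y_t-c)=Bv_t-(I-A)c.
\]
Taking $\limsup$ (and using $e_t-y_t\to0$), I get
\[
(1+\|A\|_2)\,\limsup_{t\to\infty}\|e_t-c\|_2\;\ge\;\limsup_{t\to\infty}\|Bv_t-x\|_2,\qquad x:=(I-A)c.
\]

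I then lower-bound the right-hand side by a Cesàro average. I have $\limsup\|Bv_t-x\|^2\ge \overline{\|Bv_t\|^2}-2\langle\overline{Bv_t},x\rangle+\|x\|^2$, where the bar denotes the Cesàro mean. The Cesàro means of $Q_{t\omega}$ and of $Q_{t\omega}aa^\top Q_{t\omega}^\top$ are $0$ and $\tfrac{\|a\|^2}{2}I$, respectively; both follow from geometric sums in $e^{it\omega}$ and $e^{2it\omega}$, which are nontrivial since $0<\omega<\pi$. So the cross term vanishes and
\[
\overline{\|Bv_t\|^2}=\tfrac12\|BM\|_F^2\|a\|^2 .
\]
Finally, $\|BM\|_F\ge\|B\|_F\,\sigma_{\min}(M)$ and $\|a\|\ge\|u_0\|/\|M\|_2$, so this quantity is at least $\|B\|_F^2\|u_0\|^2/(2\kappa_2(M)^2)$.

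Taking the infimum over $c$ gives Eq.~\ref{eq:osc-persistent-floor}. Positivity of the bound then shows that $e_t$ converges to no constant.
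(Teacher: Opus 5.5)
Your proof is correct. It follows the same skeleton as the paper: the Sylvester decomposition, the $A$-transient treated as $o(r^t)$, a two-step non-cancellation bound driving both the rate and the settling-time estimates, and Ces\`aro first and second moments of the rotating vector for the $r=1$ floor. Two sub-arguments differ, though.

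\textbf{Non-cancellation bound.} The paper computes the Gram matrix of $(v_t,Q_\omega v_t)$ and obtains the explicit bound $\|p_t\|_2^2+\|p_{t+1}\|_2^2\ge a^2(1-|\cos\omega|)\|C\|_F^2$, which it packages into the envelope $W_t$. You argue qualitatively instead: $\ker P$ is at most a line, and that line cannot contain both $MQ_\theta a$ and $MQ_{\theta+\omega}a=r^{-1}S\,MQ_\theta a$, because $S$ has no real eigenvector; compactness of the circle then supplies some $c>0$. Your version exposes the structural reason for non-cancellation, namely that no real invariant line of $S$ can sit inside $\ker P$, but it gives no explicit constant. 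The paper's version shows how the constant degenerates as $\omega\to0$ or $\omega\to\pi$.

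\textbf{Floor for $r=1$.} The paper averages $\|e_t-c\|_2^2$ directly, which yields the sharper intermediate floor $\|M^{-1}u_0\|_2\|PM\|_F/\sqrt2$; only afterwards does it trade $P$ for $B$ via $BM=CQ_\omega-AC$. You instead push the Sylvester identity onto the trajectory, $y_{t+1}-Ay_t=BS^tu_0$. The factor $1+\|A\|_2$ then comes from a triangle inequality along the orbit, and the Ces\`aro computation is done on $BS^tu_0$. This reaches the stated $B$-based bound without ever handling $P$, but it gives up the sharper $P$-form of the floor.
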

Theorem~\ref{thm:oscillatory-coupling} shows that a slower coupled mode transfers its rate to the complementary dynamics, delaying settlement when $r<1$ and preventing stationary convergence when $r=1$. It covers general two-dimensional spiral modes, while $M=I$ recovers a scaled rotation and $S=-r$ gives the alternating flip case of Corollary~\ref{cor:alternating-coupling}. All proofs are provided in Appendix~\ref{app:theorems}.


\vspace{-0.5em}
\subsection{Causal verification in nonlinear recurrent dynamics}
\label{sec:osc-coupling-empirical}
\vspace{-0.5em}

We now use this framework to ask whether the slow oscillatory modes identified in trained recurrent reasoners actually exert the predicted rate-limiting effect on their
nonlinear trajectories.
For each solution root, we identify a leading oscillatory eigenspace of the local Jacobian,
construct its left-invariant basis $U$, and obtain $S,A,$ and $B$ matching the decomposition in Section~\ref{sec:theory}.
We focus on cases satisfying the rate-dominance condition
$\rho(A)<r\le1$ (Appendix~\ref{app:empvalrho}).

\vspace{-0.2em}
To test the contribution of this mode in the original nonlinear dynamics, we take a
state from the native trajectory once it has entered the local linear regime around
the solution root $z^\star$.
Writing
\vspace{-0.2em}
\begin{equation}
\delta_0
:=
z_{t_0}-z^\star
=
e_0+Uu_0,
\qquad
e_0=Q\delta_0,
\qquad
u_0=U^\top\delta_0,
\label{eq:empirical-native-decomposition}
\end{equation}
separates the displacement into the identified oscillatory coordinate $u_0$ and the
remaining local state $e_0$.
We then vary only the amplitude of $u_0$ by adjusting $\alpha$,
\vspace{-0.2em}
\begin{equation}
z_{t_0}^{(\alpha)}
=
z^\star+e_0+\alpha Uu_0,
\qquad
\alpha\in\left\{1,\tfrac12,0\right\},
\label{eq:empirical-intervention}
\end{equation}
while keeping $e_0$ exactly fixed.
Here $\alpha=1$ is the native state, $\alpha=0.5$ attenuates the oscillatory
coordinate, and $\alpha=0$ removes it at the intervention point.
This is the only modification and all trajectories subsequently evolve using the
original trained nonlinear recurrence $f$.
Because all arms share the same complementary state $e_0$, the linearized rollout gives
$\widehat e_k^{(\alpha)}=Q\delta_k = QJ^k(e_0+\alpha Uu_0)$ (Eq.~\ref{eq:local-coordinates}), so subtracting the $\alpha=0$
arm isolates the response to the oscillatory coordinate. Using the decomposition in Eq.~\ref{eq:exact-decomposition}, this prediction can be further written equivalently as
\vspace{-0.2em}
\begin{equation}
\widehat{\Delta e}_k^{(\alpha)}
=
\widehat e_k^{(\alpha)}-\widehat e_k^{(0)}
=
\alpha QJ^kUu_0
=
\alpha\left(PS^k-A^kP\right)u_0.
\label{eq:general-response-prediction}
\end{equation}
This response prediction is determined entirely by the local Jacobian and the native
displacement at $t_0$, with no quantities fit to the subsequent nonlinear rollout.

\vspace{-0.5em}
\paragraph{Alternating oscillatory mode in FPRM.}
We apply the intervention protocol to the rate-dominant oscillatory mode identified in FPRM.
As illustrated in Figure~\ref{fig:teaser}, this mode appears as an approximately one-dimensional alternating flip, with
$S=-r$ and $\rho(A)<r\le1$.
The corresponding response prediction is
(derivations in Appendix~\ref{cor:alternating-coupling})
\vspace{-0.3em}
\begin{equation}
\widehat{\Delta e}_k^{(\alpha)}
=
\alpha\left[P(-r)^k-A^kP\right]u_0
=
\alpha P(-r)^ku_0+o(r^k).
\label{eq:fprm-response-prediction}
\end{equation}
Thus, changing $\alpha$ scales the leading response while leaving its asymptotic
decay rate unchanged: $\alpha=\tfrac12$ halves its leading amplitude, whereas
$\alpha=0$ removes the $r^k$ contribution.
We compare this closed-form local prediction directly with the response measured
from the original nonlinear FPRM recurrence.
For $\alpha=0$, the linearized counterfactual reduces to
$\widehat e_k^{(0)}=A^ke_0$, allowing us to additionally test whether removing
the oscillatory coordinate accelerates root-relative settlement.

\begin{figure*}[h]
\vspace{-1em}
    \centering
    \small
    \includegraphics[width=0.90\textwidth]{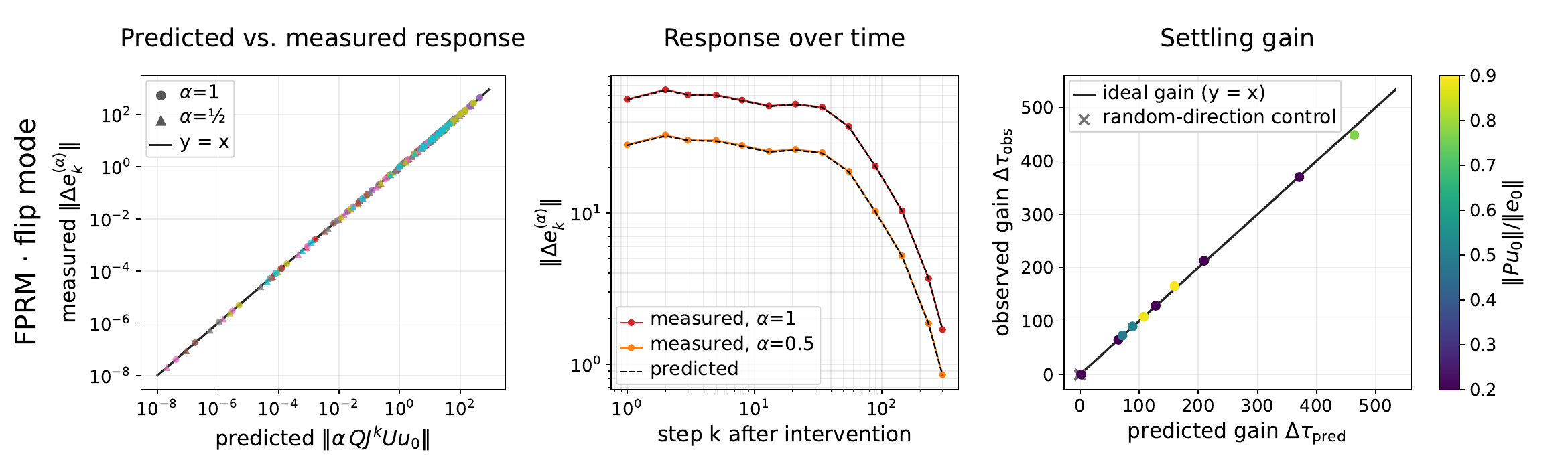}
    \vspace{-0.7em}
    \caption{
    \textbf{Local oscillatory-coupling theory predicts the response and settling cost in the original FPRM recurrence.}
    We control only the Jacobian-identified oscillatory coordinate once with
    $\alpha\in\{0,0.5,1\}$ and then resume unmodified recurrent updates. HRM results in Appendix~\ref{app:hrm-spiral-validation}.
    }
    \vspace{-0.8em}
    \label{fig:theoryfit}
\end{figure*}

Figure~\ref{fig:theoryfit} tests the oscillatory-coupling mechanism at three
levels: (i) the response induced in the complementary state, (ii) its evolution over
recurrent time, and (iii) the resulting cost in settling time.

The left panel measures how a perturbation of the oscillatory coordinate subsequently propagates into the complementary latent dynamics over $k$ recurrent steps. For each intervention strength $\alpha$ and rollout step $k$, the horizontal
axis shows the prediction from the root Jacobian, while the vertical axis shows the
corresponding change measured from the original nonlinear rollout:
\begin{equation}
\underbrace{
\|\Delta e_k^{(\alpha)}\|_2
=
\left\|
Q\!\left(
z_{t_0+k}^{(\alpha)}-z_{t_0+k}^{(0)}
\right)
\right\|_2
}_{\text{actual nonlinear rollout}}
\qquad\text{vs.}\qquad
\underbrace{
\|\widehat{\Delta e}_k^{(\alpha)}\|_2
=
\|\alpha QJ^kUu_0\|_2
}_{\text{local linear prediction}} .
\label{eq:predicted-measured-response}
\end{equation}
Agreement with the identity line shows that the response induced in the
complementary state is quantitatively predicted by the local Jacobian.

The middle panel tests whether this prediction remains accurate throughout the
subsequent nonlinear rollout. Consistent with
Eq.~\ref{eq:fprm-response-prediction}, the measured response follows the predicted
alternating temporal envelope, including the proportional reduction under
$\alpha=0.5$.

The right panel tests whether the same local mode produces the predicted delay in
convergence. Using the root-relative settling criterion
$\tau_\delta(e)=\inf\!\left\{
k:\|e_j\|\le\delta\ \text{for all }j\ge k
\right\}$
from Eq.~\ref{eq:osc-settlement}, we define
$\hat{\tau}^{(\alpha)}:=\tau_\delta(\hat e^{(\alpha)})$ and
$\tau^{(\alpha)}:=\tau_\delta(e^{(\alpha)})$ using the same sustained-window
implementation. The corresponding gains from removing the oscillatory coordinate are
$\Delta\tau_{\mathrm{pred}}
=\hat{\tau}^{(1)}-\hat{\tau}^{(0)}$
and
$\Delta\tau_{\mathrm{obs}}
=\tau^{(1)}-\tau^{(0)}$.
The identity-line agreement therefore confirms that the local prediction quantitatively matches the actual settling-time gain from removing the identified oscillatory component.

Together, the three columns test the causal chain
$u_0
\;\rightarrow\;
\Delta e_k
\;\rightarrow\;
\Delta\tau_\delta$ in the original FPRM. The root Jacobian predicts how the oscillatory coordinate
drives the complementary latent state, and removing that coordinate accelerates
settlement while preserving the decoded output.
Extended analysis for distinct two-dimensional spiral mode in HRM is provided in Appendix~\ref{app:hrm-spiral-validation}, where we obtain the same prediction--measurement agreement.

\vspace{-1em}
\section{Quotient dynamics convergence}
\label{sec:quotient_dynamics}

\vspace{-0.5em}
Our analysis in Section~\ref{sec:oscillatory_coupling} shows that slowly decaying oscillatory
modes can limit convergence speed in a locally linearized regime near a solution root.
This does not, however, justify suppressing oscillatory motion throughout the entire reasoning trajectory.
Rotational or alternating dynamics may still participate in the reasoning computation, and removing them indiscriminately can disrupt reasoning.
Indeed, persistently suppressing an alternating trajectory component in FPRM before output stabilization
degrades task performance (see Appendix~\ref{app:fprm-osc-suppression}).
Rather than attempting to directly correct the oscillatory bottlenecks identified above, QRR takes a complementary architectural approach by introducing an exact token-local gauge symmetry.

Importantly, QRR does not assume any correspondence between these gauge directions and the slow modes identified in other reasoners. QRR learns its own recurrent trajectory. The symmetry instead introduces token-local rotational degrees of freedom whose motion is separated from the task-relevant quotient dynamics. How this symmetry changes the learned dynamics is therefore an empirical question: QRR still retains oscillatory modes in quotient space, but does not develop the isolated dominant slow mode observed in FPRM or HRM.

\vspace{-0.5em}
\subsection{Token-local gauge symmetry}
\label{sec:gauge_action}
\vspace{-0.3em}

Let the recurrent state consist of $S$ token states
$h=(h_1,\ldots,h_S)$, with $h_s\in\mathbb{C}^{d}$,
together with momentum states $m_s\in\mathbb{C}^{d}$.
In our implementation, each real-valued token state $y_s\in\mathbb{R}^{2d}$ is reinterpreted as $h_s\in\mathbb{C}^{d}$ through the fixed pairing
$h_{s,c}=y_{s,2c-1}+i\,y_{s,2c}$, $c=1,\ldots,d$.
The input representation $x$ is gauge-neutral.
We introduce the group
$G=U(1)^S$,
with one phase $\phi_s$ for each token.
For
$g_\phi=(e^{i\phi_1},\ldots,e^{i\phi_S})\in G$, the group action is
\begin{equation}
g_\phi\cdot(h,m)
=
\bigl(
(e^{i\phi_s}h_s)_{s=1}^S,\,
(e^{i\phi_s}m_s)_{s=1}^S
\bigr),
\qquad
g_\phi\cdot x=x.
\label{eq:gauge_action}
\end{equation}

\vspace{-0.3em}
Thus, all complex channels within a token share a common phase, while different tokens may rotate independently.
Equivalently, in real coordinates, let $\mathcal{J}$ denote the block-diagonal infinitesimal generator of the phase rotation, so that the gauge tangent associated with token $s$ is generated by $\mathcal{J}y_s$ (or, equivalently, by $ih_s$ in complex coordinates).
We require the recurrent map $T$ to be $G$-equivariant while the prediction readout $\mathcal{R}$ is $G$-invariant,
\begin{equation}
T(g\cdot z;x)=g\cdot T(z;x),\qquad
\mathcal{R}(g\cdot z;x)=\mathcal{R}(z;x),\qquad z=(h,m).
\label{eq:gauge_equivariance}
\end{equation}
Hence, states that differ only by token-local phase are identified as the same task-relevant state, and motion along the corresponding gauge directions is invisible to the prediction readout.

\vspace{-1em}
\subsection{Quotient dynamics}
\label{sec:quotient_definition}
\vspace{-0.5em}

The symmetry identifies all states on the same group orbit,
$[z]=G\cdot z=\{g\cdot z:g\in G\}$,
yielding the quotient state space
$\mathcal{Q}=\mathcal{Z}/G$.
Because the recurrent map is $G$-equivariant, it induces a well-defined map on the quotient,
\vspace{-0.4em}
\begin{equation}
\overline{T}([z];x)=[T(z;x)].
\label{eq:quotient_map}
\end{equation}
The quotient removes only symmetry-generated degrees of freedom, while preserving all remaining latent variation.
Accordingly, a state may satisfy
$T(z^\star;x)=g^\star\!\cdot z^\star$ for some $g^\star\in G$,
making $z^\star$ a \emph{relative equilibrium} where the unreduced trajectory may continue along the group orbit, while
$\overline{T}([z^\star];x)=[z^\star]$ is an ordinary fixed point in the quotient.
Moreover, exact equivariance ensures that a perturbation along the gauge orbit remains gauge-tangent after the recurrent update, and therefore produces no change in the quotient state.

At a relative equilibrium, the quotient analogue of
Eq.~\ref{eq:local-block-dynamics} takes the form 
$e_{t+1}=A_Qe_t+B_Gu_t$ with $B_G=0$, so this symmetry-generated motion cannot
transmit its timescale into the quotient dynamics.
We formalize this non-forcing property in Proposition~\ref{prop:gauge-nonforcing}.

\vspace{-0.2em}
\paragraph{Quotient-based halting.}
\begin{wrapfigure}{r}{0.29\textwidth}
    \vspace{-1.5em}
    \centering
    \includegraphics[width=0.29\textwidth]{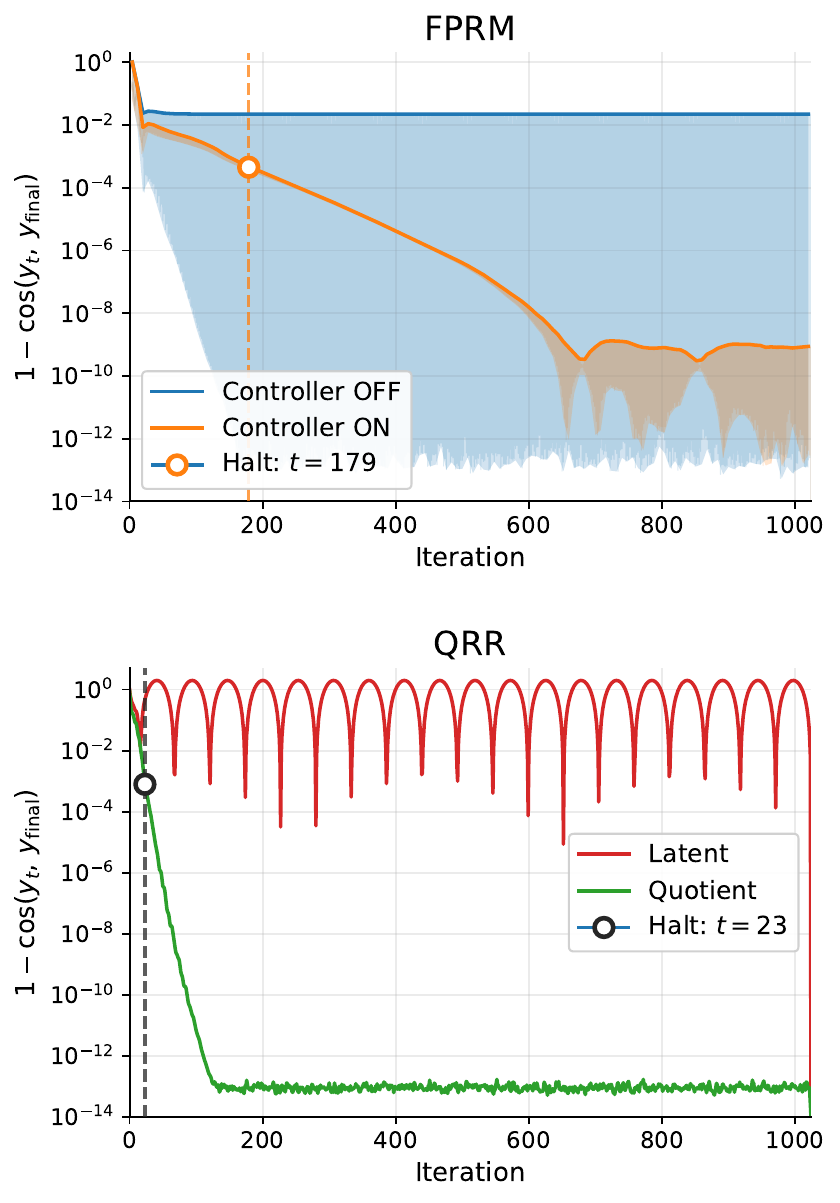}
    \vspace{-1.8em}
    \caption{{Convergence and halting in quotient space.}}
    \label{fig:quotient-halting}
    \vspace{-0.8em}
\end{wrapfigure}
The quotient view changes what it means for recurrent reasoning to have converged.
At a relative equilibrium, the state may continue moving along its gauge orbit while its quotient state remains fixed.
Thus, requiring raw-state stationarity can mistake symmetry-generated phase motion for unfinished computation even after the quotient state has converged.
Figure~\ref{fig:quotient-halting} illustrates this distinction between raw-state and quotient convergence.
In FPRM, residual oscillatory motion prevents the raw latent state from becoming stationary until it is damped by the step-size controller, making its fixed-point halting criterion sensitive to this motion.
In QRR, by contrast, the raw state may continue moving along its gauge orbit after the quotient trajectory has converged to numerical precision, allowing convergence to be detected substantially earlier.
We therefore measure convergence modulo the gauge,
\vspace{-0.7em}
\begin{equation}
r_G(z)
=
\inf_{g\in G}
\frac{\left\|T(z;x)-g\cdot z\right\|}
{\left\|z\right\|+\epsilon}.
\label{eq:gauge_residual}
\end{equation}
This residual vanishes when the current state and its next update differ only by a gauge transformation, while remaining sensitive to motion in the quotient.
At inference, QRR halts once $r_G$ remains below a calibrated threshold for a prescribed persistence window.

\subsection{Gauge-equivariant recurrent block}
\label{sec:gauge_architecture}
\vspace{-0.3em}
 
\begin{wrapfigure}{r}{0.29\textwidth}
    \vspace{-2em}
    \centering
    \includegraphics[width=\linewidth]{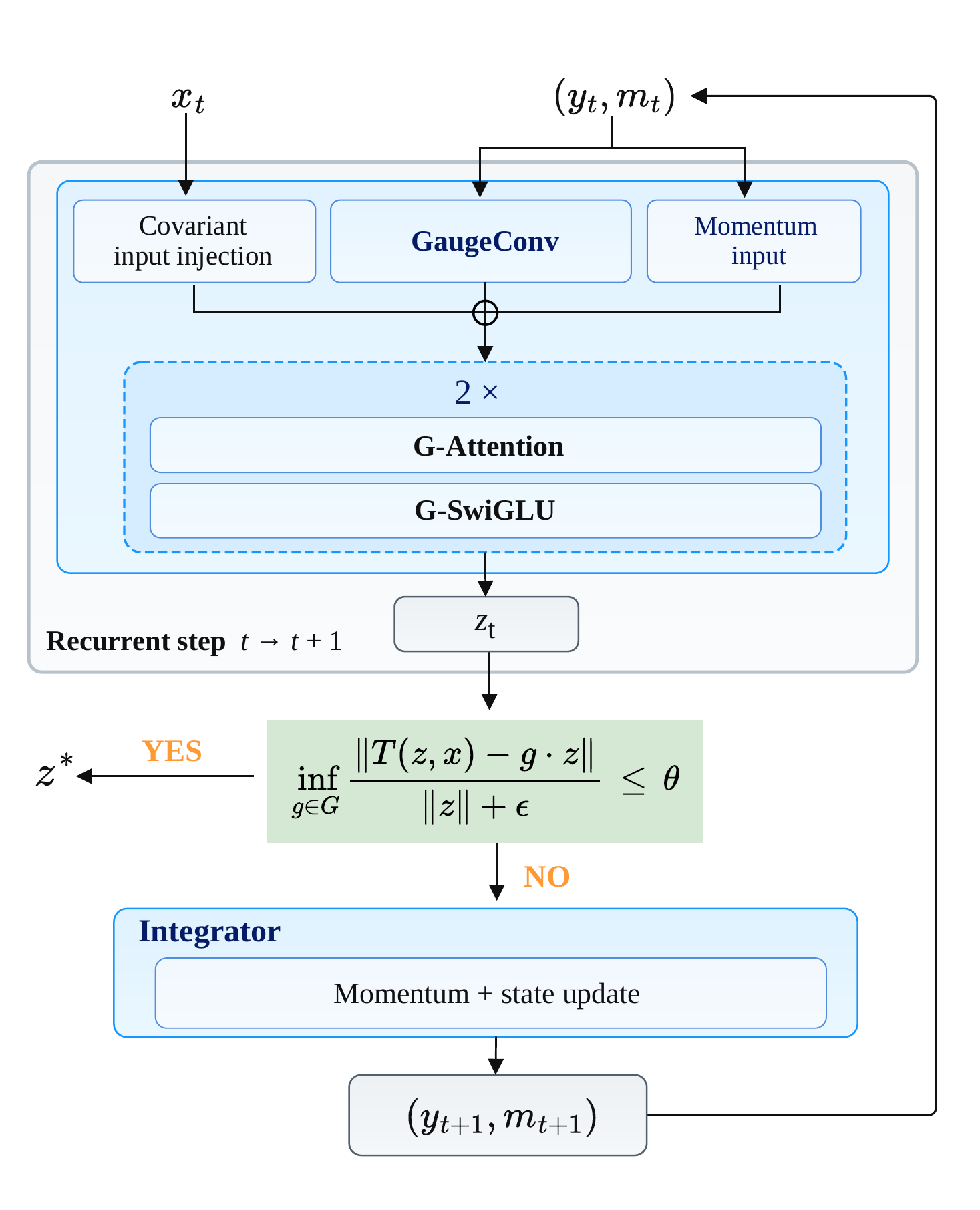}
    \vspace{-1em}
    \caption{{Overview of the QRR recurrent block.}}
    \label{fig:qrr_architecture}
    \vspace{-2em}
\end{wrapfigure}
QRR adopts the recurrent Transformer block of FPRM as its backbone, while replacing its internal operations with gauge-equivariant counterparts.
Under the token-local symmetry in Equations~\ref{eq:gauge_action}--\ref{eq:gauge_equivariance}, the latent state and momentum transform covariantly with the local token phase, while attention weights and prediction readouts remain gauge-invariant.
Figure~\ref{fig:qrr_architecture} provides an overview of the overall QRR architectural workflow.
Full architectural details are in Appendix~\ref{app:full_arc}.

\vspace{-0.6em}
\paragraph{Relative transport.}
Because each token has its own local phase, features from different tokens must be aligned before they can be mixed.
We associate each token with a covariant phase $\gamma_s\in U(1)$, satisfying
$\gamma_s\mapsto e^{i\phi_s}\gamma_s$, and define the relative transport
$U_{s\leftarrow t}
=
\gamma_s\gamma_t^\ast$.
Under a pure gauge rotation,
$U_{s\leftarrow t}\mapsto e^{i(\phi_s-\phi_t)}U_{s\leftarrow t}$,
providing the relative phase required to transport features from token $t$ into the local frame of token $s$.
We use this transport for both token mixing and self-attention.

\vspace{-0.6em}
\paragraph{Gauge-equivariant attention.}
Queries, keys, and values are produced by complex-linear maps.
Using the relative transport, we define the attention scores as
\begin{equation}
a_{st}
=
\frac{1}{\sqrt d}
\operatorname{Re}
\left[
U_{s\leftarrow t}\,
q_s^\dagger k_t
\right],
\label{eq:gauge_attention_score}
\end{equation}
which are gauge-invariant.
Values are likewise transported into the local frame of token $s$ before aggregation,
$o_s
=
W_O
\sum_t
\operatorname{softmax}_t(a_{st})\,
U_{s\leftarrow t}v_t,$
so that $o_s$ transforms covariantly with token $s$.

\vspace{-0.6em}
\paragraph{Gauge-covariant input injection and feed-forward layers.}
Because the input $x_s$ is gauge-neutral, it cannot be added directly to the recurrent state.
We construct a covariant phase carrier
\begin{equation}
D_s
=
\frac{\langle W_d x_s,h_s\rangle}
{|\langle W_d x_s,h_s\rangle|+\epsilon},
\qquad
h_s
\leftarrow
\alpha_0 h_s+\beta_0 D_s x_s,
\label{eq:gauge_input}
\end{equation}
where $D_s\mapsto e^{i\phi_s}D_s$, so that the injected input transforms in the local frame of token $s$.
For the feed-forward block, we use an invariant gate with an equivariant value branch,
\begin{equation}
\operatorname{G\text{-}SwiGLU}(h)
=
W_{\mathrm{down}}
\left[
\operatorname{SiLU}(|W_g h|)
\odot
W_v h
\right].
\label{eq:gauge_swiglu}
\end{equation}

\vspace{-1em}
\paragraph{Momentum-carrying recurrent dynamics.}
\label{sec:momentum_dynamics}
We use a momentum-carrying second-order recurrence to evolve the latent state.
At each recurrent step, the gauge-equivariant trunk produces a state proposal
$z_t=\Phi_\theta(y_t;x,m_t)$, defining the residual $F_t=z_t-y_t$.
We update the momentum as
\begin{equation}
\vspace{-0.2em}
p_t
=
\beta m_t+(1-\beta)F_t,
\qquad
\beta=\sigma(b_{\mathrm H}),
\label{eq:momentum}
\end{equation}
where $b_{\mathrm H}\in\mathbb R$ is a learned global scalar.
The recurrent state is then updated by
\begin{equation}
\vspace{-0.2em}
y_{t+1}=y_t+p_t,
\qquad
m_{t+1}=p_t.
\label{eq:momentum_update}
\end{equation}
The carried momentum is also fed back into the recurrent trunk at the next step.

\vspace{-0.2em}
\subsection{Spectral Dominance and Convergence in Reasoning Dynamics}
\label{sec:spectral_bridge}
\vspace{-0.6em}
We next examine how these dynamical constraints are reflected in the learned spectra of QRR.
\begin{wrapfigure}{r}{0.7\textwidth}
\centering
\small
\vspace{-0.8em}
\includegraphics[width=\linewidth]{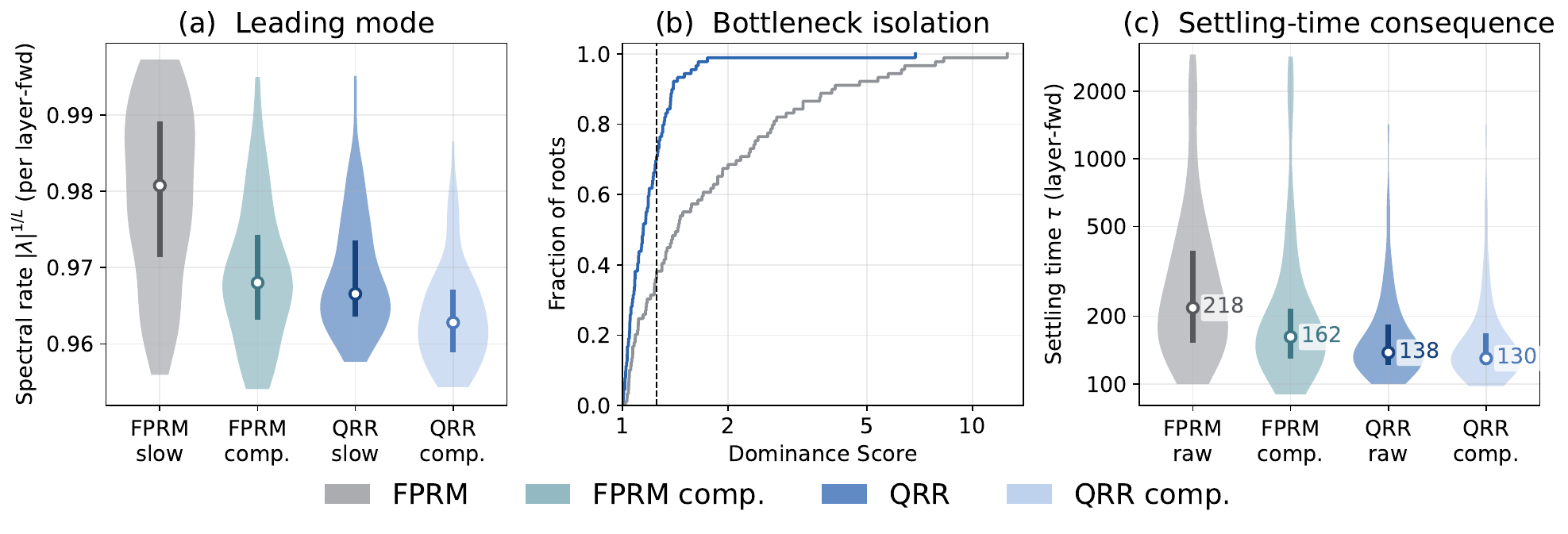}
\vspace{-2em}
\caption{
{Spectral analysis and settling near solution roots for FPRM and QRR. Results for HRM in Appendix~\ref{app:spectral_bridge}}
}
\label{fig}
\vspace{-0.8em}
\end{wrapfigure}
We characterize the local spectral structure of FPRM and QRR around certified solution roots that decode to the ground truth. FPRM roots are ordinary fixed points of the latent dynamics, whereas QRR roots are relative equilibria, fixed points in quotient space. We linearize FPRM in latent space and QRR in quotient space, and report spectral rates as $|\lambda|^{1/L}$, where $L$ is the number of recurrent layer-forwards (lfwd) per native step. Following Section~\ref{sec:oscillatory_coupling}, we decompose each system into its leading spectral mode and complementary dynamics. For FPRM, the leading mode is slow oscillatory block identified in Section~\ref{sec:osc-coupling-empirical}. For QRR, both are derived from quotient space. 

Figure~\ref{fig} (a) shows QRR operates in a faster spectral regime with less clear separation between its leading and remaining dynamics.
Figure~\ref{fig} (b) quantifies this separation using the dominance score $D=\log r_{\mathrm{comp}}/\log r_{\mathrm{slow}}$, $D=1$ indicates matched timescales, while larger $D$ indicates a more isolated slow leading mode. FPRM frequently exhibits strong spectral dominance, whereas QRR remains much closer to $D=1$ despite retaining some weaker oscillatory modes. Panel~(c) shows the corresponding time settlement. QRR settles earlier than FPRM, while factoring out the leading mode accelerates FPRM ($218\!\rightarrow\!162$) but the effect remains only modest in QRR ($138\!\rightarrow\!130$). Thus, QRR operates in a faster convergence regime with weaker dependence on a single slow leading mode.

\vspace{-0.3em}
\section{Experiments}~\label{sec:experiments}
\vspace{-3em}
\paragraph{Datasets and baselines}
We evaluate on four reasoning benchmarks covering constraint satisfaction, path finding,
and abstract transformation. \textbf{Sudoku-Extreme} consists of challenging partially
filled $9\times9$ Sudoku puzzles with unique solutions; following the standard HRM
protocol, we train on 1{,}000 base puzzles with $1{,}000\times$ augmentation
and evaluate using exact sequence accuracy on the full test set
~\citep{wang2025hierarchicalreasoningmodel}.
\textbf{Maze-Hard} is a $30\times30$ shortest-path benchmark with 1{,}000 training and
1{,}000 test mazes. We follow the FPRM setup with no augmentation
~\citep{movahedi2026fixedpointreasonersstableadaptive}.
For \textbf{Mini-ARC}, a compact $5\times5$ ARC variant for abstract
rule induction~\citep{MiniARC_NeurIPSW2022,huang2026equilibriumreasonerslearningattractors},
we use the released tasks with $100$ data augmentations per task.
We additionally evaluate on \textbf{Maze-Unique}, following the EqR construction.
Unlike Maze-Hard, where multiple optimal paths may correspond to a single supervised target, Maze-Unique uses perfect tree-structured mazes with a unique start--goal solution~\citep{MiniARC_NeurIPSW2022}. This provides a cleaner convergence analysis, avoiding the ambiguous solution geometry that can hinder formation of a single stable basin~\citep{huang2026equilibriumreasonerslearningattractors}. We compare against HRM, TRM, EqR, and FPRM, spanning hierarchical recurrence,
recursive refinement, learned attractors, and fixed-point adaptive halting.


\vspace{-0.5em}
\subsection{Results}
\vspace{-0.4em}

Table~\ref{tab:task_performance_runtime} summarizes both reasoning performance and computational cost. In Table~\ref{tab:task_performance_runtime} (a), we prioritize matched evaluation from an official checkpoint when available, otherwise use the official result, and reproduce the baseline under its native setup. HRM, TRM, and EqR use their own fixed inference budgets, whereas FPRM and QRR halt adaptively according to fixed-point residuals evaluated in the latent and quotient space, respectively. For the latter, we report halt-or-cap performance with caps of 35k recurrent steps on Sudoku and 3k on the Maze tasks. QRR achieves the strongest performance on Sudoku, Maze-Unique, and Mini-ARC while remaining competitive on Maze-Hard.
Table~\ref{tab:task_performance_runtime} (b) reports corresponding runtime cost in single A100 GPU on Sudoku-Extreme. QRR takes approximately $1.5\times$ higher LFWD latency and
$1.3\times$ higher memory than FPRM, reflecting the additional cost of its
complex-valued gauge-equivariant operations and explicit phase transport.
Nevertheless, faster convergence outweighs this overhead. The median end-to-end
rollout time decreases from $5.4$ s to $1.9$ s, and the p95 from $168.1$ s to
$60.0$ s ($2.8\times$ faster). Thus, despite its higher per-LFWD cost, QRR completes full reasoning rollout faster than FPRM.

\begin{table*}[h]
\centering
\caption{
\textbf{Reasoning performance and runtime.}
(a) exact accuracy on Sudoku-Extreme, Maze-Hard, and Maze-Unique, and task-level Pass@1 on Mini-ARC.
Models with adaptive halting (FPRM and QRR) report halt-or-cap results.
Superscripts denote: \textcolor{black}{$^{1}$ evaluation matched from an official checkpoint,
$^{2}$ official report, and
$^{3}$ full reproduction following the native setup.}
(b) Comparison of per-layer-forward (LFWD) computational cost and end-to-end rollout time (s).
}
\label{tab:task_performance_runtime}

\begin{minipage}[t]{0.655\textwidth}
\centering
\vspace{-0.5em}
\small
{(a) Reasoning performance}

\vspace{0.35em}
\begin{adjustbox}{max width=\linewidth}
\begin{tabular}{lcccccc}
\toprule
\textbf{Model}
& \textbf{Params}
& \textbf{Sudoku-Extreme}
& \multicolumn{2}{c}{\textbf{Maze}}
& \textbf{Mini-ARC} \\
\cmidrule(lr){4-5}
& (M)
& Exact $\uparrow$
& Hard Exact $\uparrow$
& Unique Exact $\uparrow$
& Pass@1 $\uparrow$ \\
\midrule

HRM
& 27
& 55.4$^1$
& 74.5$^2$
& 51.9$^3$
& 56.4$^3$ \\

TRM
& 7
& 75.7$^1$
& 85.3$^2$
& 58.0$^3$
& 40.9$^3$ \\

EqR
& 7
& 89.9$^1$
& --
& 88.9$^2$
& 41.6$^3$ \\

FPRM
& 7
& 91.5$^1$
& \textbf{86.4}$^1$
& --
& -- \\

\textcolor{black}{FPRM (Rep.)}
& 7
& \textcolor{black}{83.7}$^3$
& \textcolor{black}{85.3}$^3$
& \textcolor{black}{94.7}$^3$
& \textcolor{black}{57.1}$^3$ \\

\midrule
\rowcolor{blue!5}
\textbf{QRR}
& \textbf{7}
& \textbf{97.9}
& 85.8
& \textbf{97.0}
& \textbf{62.4} \\

\bottomrule
\end{tabular}
\end{adjustbox}
\end{minipage}
\hfill
\begin{minipage}[t]{0.325\textwidth}
\centering
\vspace{-0.7em}
\small
{(b) Runtime comparison}

\vspace{0.32em}
\scriptsize
\setlength{\tabcolsep}{3pt}
\renewcommand{\arraystretch}{1.10}

\begin{adjustbox}{max width=\linewidth}
\begin{tabular}{lcc}
\toprule
\textbf{Setting}
& \textbf{QRR}
& \textbf{FPRM} \\
\midrule

\multicolumn{3}{l}{\textit{Per LFWD ($B=512$)}} \\

Runtime (ms)
& 34.7
& 22.3 \\

Peak memory (GB)
& 2.88
& 2.14 \\

\midrule

\multicolumn{3}{l}{\textit{End-to-end rollout}} \\

Coverage (\%)
& 98.0
& 100.0 \\

Halt step, p50 / p95
& 27 / 865
& 122 / 3,769 \\

Runtime (s), p50 / p95
& 1.9 / 60.0
& 5.4 / 168.1 \\

\bottomrule
\end{tabular}
\end{adjustbox}
\end{minipage}

\end{table*}

\begin{table*}[h]
\centering
\small
\setlength{\tabcolsep}{3.5pt}
\renewcommand{\arraystretch}{1.12}
\vspace{-0.5em}
\caption{
\textbf{Convergence and adaptive halting.}
{(a)} Root convergence on Sudoku-Extreme.
{(b)} Adaptive halting of FPRM and QRR across tasks.
Coverage is the fraction halted within finite cap.
}
\label{tab:convergence_halting}

\hspace{-3.5em}
\begin{minipage}[t]{0.45\textwidth}
\centering
\vspace{-0.3em}
{(a) Root convergence on Sudoku-Extreme}
\vspace{0.7em}

\scriptsize
\begin{tabular}{@{}lcc@{}}
\toprule
\textbf{Model}
& ${\tau_{\mathrm{root}}}$
& \shortstack{{Correct root found}} \\
& \scriptsize median [IQR] $\downarrow$
& \scriptsize (\%) $\uparrow$ \\
\midrule

HRM
& 1,032 [744, 1,272]
& 55.2 \\


EqR
& 672 [546, 966]
& 86.1\\

FPRM
& 234 [152, 396]
& 90.4\\

\textbf{QRR}
& \textbf{122 [108, 148]}
& 89.6\\

\bottomrule
\end{tabular}
\end{minipage}
\hspace{-1em}
\begin{minipage}[t]{0.54\textwidth}
\centering
\vspace{-0.4em}
{(b) Adaptive halting (cap Sudoku 35k, Maze-Unique 3k)}
\vspace{0.3em}

\scriptsize
\begin{tabular}{@{}llcccc@{}}
\toprule
\textbf{Task}
& \textbf{Model}
& \shortstack{\textbf{Coverage}\\(\%) $\uparrow$}
& \shortstack{$\tau_{\mathrm{halt}}$\\ median [IQR] $\downarrow$}
& \shortstack{\textbf{Acc.@halt}\\(\%) $\uparrow$}
& \shortstack{\textbf{False halt}\\(\%) $\downarrow$} \\
\midrule

\multirow{2}{*}{Sudoku-Extreme}
& FPRM
& 100.0
& 122 [39, 295]
& 91.1
& 8.9 \\

& \textbf{QRR}
& {98.0}
& 27 [23, 63]
& {99.9}
& {0.1} \\

\midrule

\multirow{2}{*}{Maze-Unique}
& FPRM
& 96.4
& 36 [27, 76]
& 98.2
& 1.8 \\

& \textbf{QRR}
& 98.0
& 42 [32, 61]
& 98.5
& 1.5 \\
\bottomrule
\end{tabular}
\end{minipage}
\end{table*}

Table~\ref{tab:convergence_halting} (a) evaluates root convergence on
Sudoku-Extreme using Anderson acceleration applied to each trained recurrent map.
Full solver, certification, and compute-normalization details are provided in
Appendix~\ref{app:rootconvanalys}.
\emph{Correct root found} is the fraction of examples for which the solver finds a root
decoding to the ground-truth solution, while $\tau_{\mathrm{root}}$ is the first sustained
entry of the native trajectory into a neighborhood of that root, measured in recurrent
layer-forward equivalents. For QRR, distance is measured in quotient space after removing
token-local gauge motion. QRR reaches solution roots significantly earlier than the baselines while maintaining a root-finding rate comparable to FPRM.
Table~\ref{tab:convergence_halting} (b) evaluates deployed adaptive halting under finite
rollout caps. \emph{Coverage} is the fraction halted within the cap, while
$\tau_{\mathrm{halt}}$ reports the median halting step and IQR over halted examples.
Among these, \emph{Acc.@halt} is the fraction with an exactly correct prediction, and
\emph{False halt} the fraction halted incorrectly. QRR halts earlier with
fewer false halts on Sudoku, while achieving comparable halting depth with higher coverage
and accuracy on Maze-Unique.

\paragraph{Ablations}~\label{sec:ablations}
\begin{wraptable}{r}{0.61\textwidth}
    \centering
    \vspace{-0.4em}
    \setlength{\tabcolsep}{4pt}
    \renewcommand{\arraystretch}{1.10}

    \resizebox{\linewidth}{!}{%
    \begin{tabular}{lcccc}
        \toprule
        \textbf{Variant}
        & {Exact} (\%) $\uparrow$
        & {Coverage} (\%) $\uparrow$
        & {False halt} (\%) $\downarrow$
        & ${\tau_{\mathrm{halt}}}$ [IQR] $\downarrow$ \\
        \midrule

        \multicolumn{5}{l}{\textit{Training}} \\
        \rowcolor{blue!6}
        \textbf{QRR}
        & 97.9 & 98.0 & 0.1 & 27 [23, 63] \\


        global $U(1)$
        & 91.3 & 0.0 & -- & CAP (never fires) \\

        first order w/o momentum
        & 87.8 & 79.6 & 0.1 & 28 [22, 44] \\

        FPRM trunk + momentum
        & 77.0 & 100 & 23.0 & 12,650 [12,522, 12,871] \\




        \bottomrule
    \end{tabular}%
    }

    \vspace{-0.5em}
    \scriptsize
    \caption{
    \textbf{QRR ablations on Sudoku-Extreme.}
    }
    \label{tab:qrr_ablation}
    \vspace{-2.0em}
\end{wraptable}
We ablate the main structural components of QRR.
Restricting the token-local $U(1)^S$ symmetry to a single global $U(1)$ preserves moderate task accuracy but never triggers halting, indicating that a shared global phase is too restrictive to account for the residual token-wise phase motion that persists after the decoded answer settles.
Removing momentum and using first-order dynamics disrupts both exact accuracy and coverage, while leaving the faster halting time and quotient halting precision of trajectories that do settle largely intact.
Finally, replacing the G-equivariant trunk with FPRM trunk while applying momentum sharply degrades accuracy and leads to unreliable convergence. All examples eventually halt, but only after longer trajectories and with a high false-halt rate.

\section{Conclusion}
\vspace{-0.6em}

We studied how non-stationary latent dynamics constrain convergence in recurrent
reasoners. Our analysis shows that a slowly decaying oscillatory mode can transmit its
rate to coupled latent dynamics and become the bottleneck to settlement, while direct
suppression of such motion can also disrupt the reasoning computation. This motivates
QRR, which does not eliminate oscillations based on their dynamical appearance, but instead introduces an exact token-local gauge symmetry under which symmetry-generated motion leaves the quotient state unchanged by construction. Exact gauge equivariance prevents this motion from forcing the quotient dynamics, allowing convergence and halting to be defined modulo the
symmetry. 
Across various reasoning tasks, QRR achieves strong reasoning performance
while reaching solution roots much earlier and supporting reliable adaptive
halting. These results suggest that convergence in recurrent reasoning is better understood
in terms of task-relevant quotient space than raw latent space.


\subsubsection*{Acknowledgments}
This work was supported by the Hector Foundation.




\bibliography{iclr2027_conference}

@misc{wang2025hierarchicalreasoningmodel,
  title={Hierarchical Reasoning Model},
  author={Guan Wang and Jin Li and Yuhao Sun and Xing Chen and Changling Liu and Yue Wu and Meng Lu and Sen Song and Yasin Abbasi Yadkori},
  year={2025},
  eprint={2506.21734},
  archivePrefix={arXiv},
  primaryClass={cs.AI},
  url={https://arxiv.org/abs/2506.21734}
}

@misc{jolicoeurmartineau2025morerecursivereasoningtiny,
  title={Less is More: Recursive Reasoning with Tiny Networks},
  author={Alexia Jolicoeur-Martineau},
  year={2025},
  eprint={2510.04871},
  archivePrefix={arXiv},
  primaryClass={cs.LG},
  url={https://arxiv.org/abs/2510.04871}
}

@misc{feinashley2026solveloopattractormodels,
  title={Solve the Loop: Attractor Models for Language and Reasoning},
  author={Jacob Fein-Ashley and Paria Rashidinejad},
  year={2026},
  eprint={2605.12466},
  archivePrefix={arXiv},
  primaryClass={cs.LG},
  url={https://arxiv.org/abs/2605.12466}
}

@misc{movahedi2026fixedpointreasonersstableadaptive,
      title={Fixed-Point Reasoners: Stable and Adaptive Deep Looped Transformers}, 
      author={Sajad Movahedi and Vera Milovanović and Shlomo Libo Feigin and Alexander Theus and Thomas Hofmann and Valentina Boeva and T. Konstantin Rusch and Antonio Orvieto},
      year={2026},
      eprint={2606.18206},
      archivePrefix={arXiv},
      primaryClass={cs.AI},
      url={https://arxiv.org/abs/2606.18206}, 
}

@misc{yang2026stabilizingrecurrentdynamicstesttime,
      title={Stabilizing Recurrent Dynamics for Test-Time Scalable Latent Reasoning in Looped Language Models}, 
      author={Xiao-Wen Yang and Ziyu Han and Xi-Hua Zhang and Wen-Da Wei and Jie-Jing Shao and Lan-Zhe Guo and Yu-Feng Li},
      year={2026},
      eprint={2605.26733},
      archivePrefix={arXiv},
      primaryClass={cs.LG},
      url={https://arxiv.org/abs/2605.26733}, 
}

@inproceedings{NEURIPS2022_7f70331d,
 author = {Bansal, Arpit and Schwarzschild, Avi and Borgnia, Eitan and Emam, Zeyad and Huang, Furong and Goldblum, Micah and Goldstein, Tom},
 booktitle = {Advances in Neural Information Processing Systems},
 doi = {10.52202/068431-1471},
 editor = {S. Koyejo and S. Mohamed and A. Agarwal and D. Belgrave and K. Cho and A. Oh},
 pages = {20232--20242},
 publisher = {Curran Associates, Inc.},
 title = {End-to-end Algorithm Synthesis with Recurrent Networks: Extrapolation without Overthinking},
 url = {https://proceedings.neurips.cc/paper_files/paper/2022/file/7f70331dbe58ad59d83941dfa7d975aa-Paper-Conference.pdf},
 volume = {35},
 year = {2022}
}

@misc{blayney2026mechanistic,
  title         = {A Mechanistic Analysis of Looped Reasoning Language Models},
  author        = {Blayney, Hugh and Arroyo, {\'A}lvaro and Obando-Ceron, Johan and Castro, Pablo Samuel and Courville, Aaron and Bronstein, Michael M. and Dong, Xiaowen},
  year          = {2026},
  eprint        = {2604.11791},
  archivePrefix = {arXiv}
}

@misc{kim2026dynamicalphaseselectioncontrols,
      title={Dynamical phase selection controls compute scaling in looped transformers}, 
      author={Gunn Kim},
      year={2026},
      eprint={2608.26556},
      archivePrefix={arXiv},
      primaryClass={cond-mat.dis-nn},
      url={https://arxiv.org/abs/2608.26556}, 
}

@misc{huang2026equilibriumreasonerslearningattractors,
      title={Equilibrium Reasoners: Learning Attractors Enables Scalable Reasoning}, 
      author={Benhao Huang and Zhengyang Geng and Zico Kolter},
      year={2026},
      eprint={2605.21488},
      archivePrefix={arXiv},
      primaryClass={cs.LG},
      url={https://arxiv.org/abs/2605.21488}, 
}

@misc{geiping2025scalingtesttimecomputelatent,
      title={Scaling up Test-Time Compute with Latent Reasoning: A Recurrent Depth Approach}, 
      author={Jonas Geiping and Sean McLeish and Neel Jain and John Kirchenbauer and Siddharth Singh and Brian R. Bartoldson and Bhavya Kailkhura and Abhinav Bhatele and Tom Goldstein},
      year={2025},
      eprint={2502.05171},
      archivePrefix={arXiv},
      primaryClass={cs.LG},
      url={https://arxiv.org/abs/2502.05171}, 
}

@misc{pappone2025twoscalelatentdynamicsrecurrentdepth,
      title={Two-Scale Latent Dynamics for Recurrent-Depth Transformers}, 
      author={Francesco Pappone and Donato Crisostomi and Emanuele Rodolà},
      year={2025},
      eprint={2509.23314},
      archivePrefix={arXiv},
      primaryClass={cs.LG},
      url={https://arxiv.org/abs/2509.23314}, 
}

@article{weiler2021coordinate,
  title   = {Coordinate Independent Convolutional Networks: Isometry and Gauge Equivariant Convolutions on Riemannian Manifolds},
  author  = {Weiler, Maurice and Forr{\'e}, Patrick and Verlinde, Erik and Welling, Max},
  journal = {arXiv preprint arXiv:2106.06020},
  year    = {2021}
}

@article{wiersma2020cnns,
  title     = {CNNs on Surfaces using Rotation-Equivariant Features},
  author    = {Wiersma, Ruben and Eisemann, Elmar and Hildebrandt, Klaus},
  journal   = {ACM Transactions on Graphics},
  volume    = {39},
  number    = {4},
  year      = {2020},
  doi       = {10.1145/3386569.3392437},
  publisher = {ACM}
}

@inproceedings{kondor2018generalization,
  title     = {On the Generalization of Equivariance and Convolution in Neural Networks to the Action of Compact Groups},
  author    = {Kondor, Risi and Trivedi, Shubhendu},
  booktitle = {Proceedings of the 35th International Conference on Machine Learning},
  pages     = {2747--2755},
  year      = {2018},
  volume    = {80},
  series    = {Proceedings of Machine Learning Research},
  publisher = {PMLR}
}

@article{cohen2019general,
  title   = {A General Theory of Equivariant CNNs on Homogeneous Spaces},
  author  = {Cohen, Taco S. and Geiger, Mario and Weiler, Maurice},
  journal = {arXiv preprint arXiv:1811.02017},
  year    = {2019}
}

@book{marsden1999introduction,
  title     = {Introduction to Mechanics and Symmetry: A Basic Exposition of Classical Mechanical Systems},
  author    = {Marsden, Jerrold E. and Ratiu, Tudor S.},
  series    = {Texts in Applied Mathematics},
  volume    = {17},
  publisher = {Springer},
  year      = {1999}
}

@article{bronstein2021geometric,
  title   = {Geometric Deep Learning: Grids, Groups, Graphs, Geodesics, and Gauges},
  author  = {Michael M. Bronstein and Joan Bruna and Taco Cohen and Petar Veli{\v{c}}kovi{\'c}},
  year    = {2021},
  journal = {arXiv preprint arXiv:2104.13478}
}

@inproceedings{haan2021gauge,
  title     = {Gauge Equivariant Mesh CNNs: Anisotropic convolutions on geometric graphs},
  author    = {Pim De Haan and Maurice Weiler and Taco Cohen and Max Welling},
  year      = {2021},
  booktitle = {International Conference on Learning Representations},
  url       = {https://openreview.net/forum?id=Jnspzp-oIZE}
}

@InProceedings{pmlr-v97-cohen19d,
  title     = {Gauge Equivariant Convolutional Networks and the Icosahedral {CNN}},
  author    = {Cohen, Taco and Weiler, Maurice and Kicanaoglu, Berkay and Welling, Max},
  booktitle = {Proceedings of the 36th International Conference on Machine Learning},
  pages     = {1321--1330},
  year      = {2019},
  editor    = {Chaudhuri, Kamalika and Salakhutdinov, Ruslan},
  volume    = {97},
  series    = {Proceedings of Machine Learning Research},
  publisher = {PMLR},
  url       = {https://proceedings.mlr.press/v97/cohen19d.html}
}

@article{krupa1990bifurcations,
  author  = {Krupa, Martin},
  title   = {Bifurcations of Relative Equilibria},
  journal = {SIAM Journal on Mathematical Analysis},
  volume  = {21},
  number  = {6},
  pages   = {1453--1486},
  year    = {1990},
  doi     = {10.1137/0521081}
}

@article{gerken2023geometric,
  title   = {Geometric Deep Learning and Equivariant Neural Networks},
  author  = {Gerken, Jan E. and Aronsson, Jimmy and Carlsson, Oscar and
             Linander, Hampus and Ohlsson, Fredrik and Petersson, Christoffer and
             Persson, Daniel},
  journal = {Artificial Intelligence Review},
  volume  = {56},
  pages   = {14605--14662},
  year    = {2023},
  doi     = {10.1007/s10462-023-10502-7}
}

@inproceedings{NEURIPS2021_e57c6b95,
  author={He, Lingshen and Dong, Yiming and Wang, Yisen and Tao, Dacheng and Lin, Zhouchen},
  booktitle={Advances in Neural Information Processing Systems},
  pages={27331--27343},
  title={Gauge Equivariant Transformer},
  volume={34},
  year={2021}
}

@inproceedings{MiniARC_NeurIPSW2022,
  title={{Playgrounds for Abstraction and Reasoning}},
  author={Kim, Subin and Phunyaphibarn, Prin and Ahn, Donghyun and Kim, Sundong},
  year={2022},
  web={https://bit.ly/ARC-GIST},
  booktitle={NeurIPS Workshop on neuro Causal and Symbolic AI}
}

@inproceedings{dehghani2019universaltransformers,
  title     = {Universal Transformers},
  author    = {Dehghani, Mostafa and Gouws, Stephan and Vinyals, Oriol and Uszkoreit, Jakob and Kaiser, Lukasz},
  booktitle = {International Conference on Learning Representations},
  year      = {2019}
}

@inproceedings{saunshi2025reasoninglatentthoughts,
  title     = {Reasoning with Latent Thoughts: On the Power of Looped Transformers},
  author    = {Saunshi, Nikunj and Dikkala, Nishanth and Li, Zhiyuan and Kumar, Sanjiv and Reddi, Sashank J.},
  booktitle = {International Conference on Learning Representations},
  year      = {2025},
  url       = {https://openreview.net/forum?id=din0lGfZFd}
}

@inproceedings{fan2025loopedtransformerslengthgeneralization,
  title     = {Looped Transformers for Length Generalization},
  author    = {Fan, Ying and Du, Yilun and Ramchandran, Kannan and Lee, Kangwook},
  booktitle = {International Conference on Learning Representations},
  year      = {2025},
  url       = {https://openreview.net/forum?id=2edigk8yoU}
}
\bibliographystyle{iclr2027_conference}

\clearpage

\appendix

\section{Appendix}

\subsection{Theoretical Proofs}~\label{app:theorems}

\coupledresponse*

\begin{proof}
Define the Sylvester operator
\[
\mathcal L(X):=XS-AX,
\qquad
\mathcal L:\mathbb R^{d\times m}\to\mathbb R^{d\times m}.
\]
Using vectorization,
\begin{equation}
\operatorname{vec}(\mathcal L(X))
=
\left(S^\top\otimes I_d-I_m\otimes A\right)\operatorname{vec}(X).
\label{eq:sylvester-vectorized}
\end{equation}
The eigenvalues of
$S^\top\otimes I_d-I_m\otimes A$
are
\[
\mu-\lambda,
\qquad
\mu\in\sigma(S),\quad \lambda\in\sigma(A).
\]
Because $\sigma(A)\cap\sigma(S)=\emptyset$, none of these eigenvalues is zero.
Hence the matrix in~Eq.~\ref{eq:sylvester-vectorized} is invertible, so
$\mathcal L$ is bijective.
Therefore, for every $B$, there exists a unique
$P\in\mathbb R^{d\times m}$ satisfying
\[
PS-AP=B.
\]

Now define the reparameterized auxiliary variable using $P$,
\begin{equation}
w_t:=e_t-Pu_t.
\end{equation}
Using
$e_{t+1}=Ae_t+Bu_t$,
$u_{t+1}=Su_t$, and
$PS-AP=B$, we obtain
\begin{align}
w_{t+1}
&=e_{t+1}-Pu_{t+1} \\
&=Ae_t+Bu_t-PSu_t \\
&=Ae_t-APu_t \\
&=A(e_t-Pu_t)
=Aw_t.
\end{align}
Thus
\[
w_t=A^tw_0=A^t(e_0-Pu_0).
\]
Since $u_t=S^tu_0$, substituting back into
$e_t=w_t+Pu_t$ gives
\[
e_t
=
A^t(e_0-Pu_0)+PS^tu_0,
\]
which proves~Eq.~\ref{eq:exact-decomposition}.
\end{proof}

\oscillatorycoupling*

\begin{proof}
Since
\[
S=M(rQ_\omega)M^{-1}
\]
and every eigenvalue of $S$ has modulus $r>\rho(A)$, we have
\[
\sigma(A)\cap\sigma(S)=\emptyset.
\]
Hence Lemma~\ref{lem:coupled-response} gives a unique solution $P$ of
$PS-AP=B$ and the exact decomposition
\begin{equation}
e_t=A^t(e_0-Pu_0)+PS^tu_0.
\label{eq:osc-proof-decomposition}
\end{equation}
Since $B\neq0$, the Sylvester equation implies $P\neq0$.
Define
\[
q:=M^{-1}u_0,\qquad C:=PM,\qquad a:=\|q\|_2.
\]
Invertibility of $M$ implies $a>0$ and $C\neq0$.
Moreover,
\[
S^t=r^tMQ_\omega^tM^{-1},
\]
so
\[
PS^tu_0=r^tCQ_\omega^tq.
\]
This change of oscillator coordinates leaves the complementary state
$e_t$ and its Euclidean norm unchanged.
Choose $\nu$ such that
\[
\rho(A)<\nu<r.
\]
There exists $C_\nu<\infty$ satisfying
\[
\|A^t\|_2\leq C_\nu\nu^t,\qquad t\geq0.
\]
Writing
\[
\xi_t:=A^t(e_0-Pu_0),\qquad p_t:=CQ_\omega^tq,
\]
we obtain
\begin{equation}
e_t=r^tp_t+\xi_t,\qquad
\|\xi_t\|_2=O(\nu^t)=o(r^t),
\label{eq:osc-leading-response}
\end{equation}
where $\xi_t$ takes the A-transient part and $r^tp_t$ corresponds to the oscillatory response. We first establish a uniform two-step lower bound on the oscillatory
response. This accounts for possible cancellation at individual steps.
Let
\[
v_t:=\frac{Q_\omega^tq}{a} \in \mathbb{R}^2,
\qquad
V_t:=\begin{pmatrix}v_t & Q_\omega v_t\end{pmatrix} \in \mathbb{R}^{2\times2}.
\]
Because $Q_\omega$ is orthogonal,
\[
\|v_t\|_2=\|Q_\omega v_t\|_2=1,
\qquad
\langle v_t,Q_\omega v_t\rangle=\cos\omega.
\]
Thus
\[
V_t^\top V_t=
\begin{pmatrix}
1 & \cos\omega\\
\cos\omega & 1
\end{pmatrix}
\]
has eigenvalues $1\pm|\cos\omega|$.
Moreover, because $v_t$ and $Q_\omega v_t$ are non-collinear for
$0<\omega<\pi$, the Gram matrix $V_t^\top V_t$ is positive definite.
The same eigenvalues occur in $V_tV_t^\top$, giving
\[
(1-|\cos\omega|)I_2
\preceq V_tV_t^\top
\preceq(1+|\cos\omega|)I_2.
\]
Since
\[
p_t=aCv_t,\qquad p_{t+1}=aCQ_\omega v_t,
\]
we have
\[
\|p_t\|_2^2+\|p_{t+1}\|_2^2
=a^2\|CV_t\|_F^2
=a^2\operatorname{tr}(CV_tV_t^\top C^\top).
\]
Consequently,
\begin{equation}
a^2(1-|\cos\omega|)\|C\|_F^2
\leq
\|p_t\|_2^2+\|p_{t+1}\|_2^2
\leq
a^2(1+|\cos\omega|)\|C\|_F^2.
\label{eq:osc-window-bounds}
\end{equation}
The lower bound is strictly positive because $0<\omega<\pi$,
$a>0$, and $C\neq0$.
The results show that $p_t$ can be temporarily collapsed close to 0 but $p_t$ and $p_{t+1}$ cannot become arbitrarily small at the same time. 

Extending this result to $e_t = r^tp_t + \xi_t$, define the two-step envelope
\begin{equation}
W_t:=
\left(\|e_t\|_2^2+r^{-2}\|e_{t+1}\|_2^2\right)^{1/2}.
\label{eq:osc-envelope-definition}
\end{equation}
Eq.~\ref{eq:osc-leading-response} gives
\[
\begin{pmatrix}
e_t\\
r^{-1}e_{t+1}
\end{pmatrix}
=
r^t
\begin{pmatrix}
p_t\\
p_{t+1}
\end{pmatrix}
+
\begin{pmatrix}
\xi_t\\
r^{-1}\xi_{t+1}
\end{pmatrix},
\]
where
\[
\left\|
\begin{pmatrix}
\xi_t\\
r^{-1}\xi_{t+1}
\end{pmatrix}
\right\|_2
=O(\nu^t)=o(r^t).
\]
The triangle and reverse triangle inequalities therefore imply
\[
\left|
W_t-r^t\left(\|p_t\|_2^2+\|p_{t+1}\|_2^2\right)^{1/2}
\right|
=O(\nu^t).
\]
Combining this with Eq.~\ref{eq:osc-window-bounds}, there exist constants
$c_-,c_+>0$ and $t_0$ such that
\begin{equation}
c_-r^t\leq W_t\leq c_+r^t,\qquad t\geq t_0.
\label{eq:osc-envelope}
\end{equation}
Now suppose $r<1$.
Since
\[
\|p_t\|_2\leq\|C\|_2a,
\]
Eq.~\ref{eq:osc-leading-response} implies $e_t\to0$ and
\[
\limsup_{t\to\infty}\|e_t\|_2^{1/t}\leq r.
\]
If this limsup were strictly smaller than $r$, there would exist
$0<s<r$ such that
\[
\|e_t\|_2=O(s^t).
\]
It would follow from Eq.~\ref{eq:osc-envelope-definition} that
\[
W_t=O(s^t),
\]
contradicting the lower bound $W_t\geq c_-r^t$ in
Eq.~\ref{eq:osc-envelope}.
Hence
\[
\limsup_{t\to\infty}\|e_t\|_2^{1/t}=r,
\]
which proves Eq.~\ref{eq:osc-inherited-rate}.
We next derive the settling-time asymptotic.
Since $e_t\to0$, $\tau_\delta(e)$ is finite for every $\delta>0$.
The positive lower bound in Eq.~\ref{eq:osc-envelope} rules out an
eventually zero trajectory, 
\[
\tau_\delta(e)\longrightarrow\infty
\qquad\text{as }\delta\downarrow0.
\]
Thus, for sufficiently small $\delta$, the relevant settling times
are at least $t_0$.
If $t\geq t_0$ satisfies
\[
c_+r^t\leq\delta,
\]
then, for every $k\geq t$,
\[
\|e_k\|_2
\leq W_k
\leq c_+r^k
\leq c_+r^t
\leq\delta.
\]
Therefore, for sufficiently small $\delta$,
\begin{equation}
\tau_\delta(e)
\leq
\left\lceil\frac{\log(c_+/\delta)}{-\log r}\right\rceil
\leq
1+\frac{\log(c_+/\delta)}{-\log r} = \frac{\log\!\left(1/\delta\right)}{-\log r} + O(1).
\label{eq:osc-settling-upper}
\end{equation}
Conversely for the lower bound, if $\|e_k\|_2\leq\delta$ for every $k\geq t$, then
\[
W_t\leq\delta\sqrt{1+r^{-2}}.
\]
At $t=\tau_\delta(e)\geq t_0$, the lower bound in
Eq.~\ref{eq:osc-envelope} therefore gives
\[
c_-r^{\tau_\delta(e)}
\leq
\delta\sqrt{1+r^{-2}},
\]
and hence
\begin{equation}
\tau_\delta(e)
\geq
\frac{\log\!\left(c_-/(\delta\sqrt{1+r^{-2}})\right)}{-\log r} = \frac{\log\!\left(1/\delta\right)}{-\log r} + O(1).
\label{eq:osc-settling-lower}
\end{equation}
Combining Eq.~\ref{eq:osc-settling-upper} and
Eq.~\ref{eq:osc-settling-lower} yields
\[
\tau_\delta(e)
=
\frac{\log(1/\delta)}{-\log r}+O(1),
\qquad
\delta\downarrow0,
\]
which proves Eq.~\ref{eq:osc-settling-asymptotic}. This shows that $e_t$ inherits the slow rate of the oscillatory mode with $r < 1$ due to the coupling.

Now it remains to consider $r=1$.
Eq.~\ref{eq:osc-leading-response} becomes
\begin{equation}
e_t=p_t+\xi_t,\qquad
p_t=CQ_\omega^tq,\qquad
\xi_t\to0.
\label{eq:osc-persistent-decomposition}
\end{equation}
The sequence $(e_t)$ is bounded because $(p_t)$ is bounded and
$\xi_t$ decays geometrically.
Write
\[
q=a(\cos\phi,\sin\phi)^\top.
\]
Since $0<\omega<\pi$, both $e^{i\omega}$ and $e^{2i\omega}$ differ
from $1$.
For $\ell\in\{1,2\}$, the geometric-sum formula gives
\[
\frac1T\sum_{t=0}^{T-1}e^{i\ell t\omega}
=
\frac{1-e^{i\ell T\omega}}{T(1-e^{i\ell\omega})}
\longrightarrow0.
\]
Applying the $\ell=1$ identity to
\[
Q_\omega^tq
=
a
\begin{pmatrix}
\cos(\phi+t\omega)\\
\sin(\phi+t\omega)
\end{pmatrix}
\]
yields zero-mean for the rotating vector $Q_\omega^tq$
\begin{equation}
\frac1T\sum_{t=0}^{T-1}Q_\omega^tq
\longrightarrow0.
\label{eq:osc-mean-zero}
\end{equation}
For the second moment, the double-angle identities give
\[
Q_\omega^tqq^\top(Q_\omega^t)^\top
=
\frac{a^2}{2}I_2
+
\frac{a^2}{2}
\begin{pmatrix}
\cos(2\phi+2t\omega) & \sin(2\phi+2t\omega)\\
\sin(2\phi+2t\omega) & -\cos(2\phi+2t\omega)
\end{pmatrix}.
\]
The $\ell=2$ identity therefore implies
\begin{equation}
\frac1T\sum_{t=0}^{T-1}
Q_\omega^tqq^\top(Q_\omega^t)^\top
\longrightarrow
\frac{a^2}{2}I_2.
\label{eq:osc-second-moment}
\end{equation}
These limits require no irrationality assumption on $\omega/(2\pi)$.
Since $p_t=CQ_\omega^tq$, we obtain
\[
\frac1T\sum_{t=0}^{T-1}p_t
=
C\left(\frac1T\sum_{t=0}^{T-1}Q_\omega^tq\right)
\longrightarrow0,
\]
and
\[
\frac1T\sum_{t=0}^{T-1}\|p_t\|_2^2
=
\operatorname{tr}\!\left[
C\left(
\frac1T\sum_{t=0}^{T-1}
Q_\omega^tqq^\top(Q_\omega^t)^\top
\right)C^\top
\right]
\longrightarrow
\frac{a^2}{2}\|C\|_F^2.
\]
Thus
\begin{equation}
\frac1T\sum_{t=0}^{T-1}p_t\longrightarrow0,
\qquad
\frac1T\sum_{t=0}^{T-1}\|p_t\|_2^2
\longrightarrow\frac{a^2}{2}\|C\|_F^2.
\label{eq:osc-p-response-moments}
\end{equation}
Fix any $c\in\mathbb R^d$.
The squared distance satisfies
\[
\|e_t-c\|_2^2
=
\|p_t-c\|_2^2
+
2\langle p_t-c,\xi_t\rangle
+
\|\xi_t\|_2^2.
\]
Because $\xi_t$ decays geometrically, the last two terms vanish as
\[
\frac1T\sum_{t=0}^{T-1}\|\xi_t\|_2\longrightarrow0,
\qquad
\frac1T\sum_{t=0}^{T-1}\|\xi_t\|_2^2\longrightarrow0.
\]
Moreover,
\[
\left|
\frac1T\sum_{t=0}^{T-1}\langle p_t-c,\xi_t\rangle
\right|
\leq
\left(\|C\|_2a+\|c\|_2\right)
\frac1T\sum_{t=0}^{T-1}\|\xi_t\|_2
\longrightarrow0.
\]
Using
\[
\|p_t-c\|_2^2
=
\|p_t\|_2^2-2\langle p_t,c\rangle+\|c\|_2^2,
\]
Eq.~\ref{eq:osc-p-response-moments} therefore yields
\begin{equation}
\lim_{T\to\infty}
\frac1T\sum_{t=0}^{T-1}\|e_t-c\|_2^2
=
\frac{a^2}{2}\|C\|_F^2+\|c\|_2^2.
\label{eq:osc-mean-square-radius}
\end{equation}
For every $c$, the limsup is at least the limiting Ces\`aro average,
\[
\limsup_{t\to\infty}\|e_t-c\|_2^2
\geq
\lim_{T\to\infty}
\frac1T\sum_{t=0}^{T-1}\|e_t-c\|_2^2.
\]
Taking square roots and then the infimum over $c$ gives
\begin{equation}
\mathcal E(e)
\geq
\frac{a}{\sqrt2}\|C\|_F
=
\frac{\|q\|_2\|C\|_F}{\sqrt2}
=
\frac{\|M^{-1}u_0\|_2\|PM\|_F}{\sqrt2}.
\label{eq:osc-response-floor}
\end{equation}
Finally, we express this bound in the original coordinates.
Multiplying $PS-AP=B$ on the right by $M$, and using
$SM=MQ_\omega$ when $r=1$, gives
\[
BM=PSM-APM=CQ_\omega-AC.
\]
Since $Q_\omega$ is orthogonal,
\[
\|CQ_\omega\|_F=\|C\|_F.
\]
Consequently,
\begin{align*}
\|BM\|_F
&\leq \|CQ_\omega\|_F+\|AC\|_F\\
&\leq \|C\|_F+\|A\|_2\|C\|_F\\
&=(1+\|A\|_2)\|C\|_F.
\end{align*}
Substituting into Eq.~\ref{eq:osc-response-floor} gives
\[
\mathcal E(e)
\geq
\frac{\|M^{-1}u_0\|_2\|BM\|_F}
{\sqrt2(1+\|A\|_2)}.
\]
The identities $u_0=M(M^{-1}u_0)$ and $B=(BM)M^{-1}$ imply
\[
\|u_0\|_2
\leq
\|M\|_2\|M^{-1}u_0\|_2
\]
and
\[
\|B\|_F
\leq
\|BM\|_F\|M^{-1}\|_2.
\]
Therefore,
\[
\|M^{-1}u_0\|_2\|BM\|_F
\geq
\frac{\|u_0\|_2\|B\|_F}
{\|M\|_2\|M^{-1}\|_2}
=
\frac{\|u_0\|_2\|B\|_F}{\kappa_2(M)}.
\]
We conclude that
\[
\mathcal E(e)
\geq
\frac{\|u_0\|_2\|B\|_F}
{\sqrt2\,\kappa_2(M)(1+\|A\|_2)}
>0,
\]
which proves Eq.~\ref{eq:osc-persistent-floor}.
Hence the complementary state remains bounded but cannot converge to
any constant.
\end{proof}

\begin{restatable}[Alternating modes]{corollary}{alternatingcoupling}
\label{cor:alternating-coupling}
Consider the alternating oscillatory mode
\begin{equation}
e_{t+1}=Ae_t+bu_t,\qquad u_{t+1}=-ru_t,
\label{eq:alternating-coupled}
\end{equation}
where $e_t,b\in\mathbb R^d$, $u_t\in\mathbb R$, $A \in \mathbb{R}^{d \times d}$, 
$\rho(A)<r\leq1$, and $bu_0\neq0$.
If $r<1$, the rate and settling-time conclusions from~\ref{thm:oscillatory-coupling}
hold. More precisely, we can further derive
\begin{equation}
\lim_{t\to\infty}\frac{\|e_t\|_2}{r^t}=\|(-rI-A)^{-1}bu_0\|_2>0.
\label{eq:alternating-amplitude}
\end{equation}
If $r=1$, the trajectory approaches an alternating two-cycle, and
\begin{equation}
\mathcal E(e)=\|(I+A)^{-1}bu_0\|_2>0.
\label{eq:alternating-floor}
\end{equation}
\end{restatable}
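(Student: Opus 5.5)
The plan is to rerun the Sylvester argument of Lemma~\ref{lem:coupled-response} with the scalar block $S=-r$ ($m=1$, $B=b$). I will not invoke Theorem~\ref{thm:oscillatory-coupling} directly, because its hypothesis Eq.~\ref{eq:osc-rotation} requires a genuine planar rotation with $0<\omega<\pi$. Here $\omega=\pi$ with a one-dimensional block. The one-dimensional case is in fact easier: there is no cancellation between consecutive steps, so the two-step envelope $W_t$ is unnecessary and the sharper amplitude statements follow directly.

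\textbf{Step 1 (solve the Sylvester equation).} Since $\rho(A)<r$, the value $-r$ is not an eigenvalue of $A$, so $\sigma(A)\cap\sigma(S)=\emptyset$. The equation $PS-AP=b$ reads $(-rI-A)P=b$, which gives
\[
P=(-rI-A)^{-1}b .
\]
Lemma~\ref{lem:coupled-response} then yields
\[
e_t=(-r)^t v+\xi_t,\qquad v:=Pu_0=(-rI-A)^{-1}bu_0,\qquad \xi_t:=A^t(e_0-Pu_0).
\]
Because $(-rI-A)^{-1}$ is injective and $bu_0\neq0$, we have $v\neq0$. As in the theorem's proof, I fix $\nu\in(\rho(A),r)$ and use $\|A^t\|_2\le C_\nu\nu^t$, which gives $\|\xi_t\|_2=O(\nu^t)=o(r^t)$.

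\textbf{Step 2 (case $r<1$).} Dividing by $r^t$ gives $e_t/r^t=(-1)^t v+o(1)$. The reverse triangle inequality then yields $\|e_t\|_2/r^t\to\|v\|_2$, which is Eq.~\ref{eq:alternating-amplitude}.

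From this limit the other conclusions follow:
\begin{itemize}
\item $e_t\to0$ and $\|e_t\|_2^{1/t}\to r$, so the limsup in Eq.~\ref{eq:osc-inherited-rate} equals $r>\rho(A)$.
\item There is $t_0$ such that $\tfrac12\|v\|_2 r^t\le\|e_t\|_2\le 2\|v\|_2 r^t$ for all $t\ge t_0$. The lower bound shows $e$ is never eventually zero, so $\tau_\delta(e)\to\infty$ as $\delta\downarrow0$.
\item For the upper bound on $\tau_\delta$: any $t\ge t_0$ with $2\|v\|_2r^t\le\delta$ works, since $r^k$ is monotone for $k\ge t$.
\item For the lower bound on $\tau_\delta$: evaluate at $k=\tau_\delta(e)\ge t_0$, which gives $\tfrac12\|v\|_2r^{\tau_\delta}\le\delta$.
\end{itemize}
Taking logarithms in both bounds gives $\tau_\delta(e)=\log(1/\delta)/(-\log r)+O(1)$.

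\textbf{Step 3 (case $r=1$).} Now $e_t=(-1)^tv+\xi_t$ with $\xi_t\to0$. Hence $e_{2k}\to v$ and $e_{2k+1}\to -v$, so the trajectory approaches the two-cycle $\{v,-v\}$. For any $c\in\mathbb R^d$,
\[
\limsup_{t\to\infty}\|e_t-c\|_2=\max\bigl(\|v-c\|_2,\|v+c\|_2\bigr)\ \ge\ \|v\|_2 ,
\]
where the inequality uses $2v=(v-c)+(v+c)$. Equality holds at $c=0$. Therefore
\[
\mathcal E(e)=\|v\|_2=\|(-I-A)^{-1}bu_0\|_2=\|(I+A)^{-1}bu_0\|_2>0 .
\]

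I expect no serious obstacle. The only point needing care is to justify the settling-time bounds directly rather than citing the theorem's proof, since its hypothesis excludes $\omega=\pi$. The explicit limit $\|e_t\|_2/r^t\to\|v\|_2$ replaces the envelope argument used there.
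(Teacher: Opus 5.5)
Your proposal is correct and follows essentially the same route as the paper. The steps coincide: the decomposition $e_t=A^t(e_0-v)+(-r)^tv$ with $v=(-rI-A)^{-1}bu_0$, the limit $r^{-t}e_t=(-1)^tv+o(1)$, and the two-cycle computation $\mathcal E(e)=\|v\|_2$. The one difference is that you derive the rate and settling-time bounds directly from $\tfrac12\|v\|_2r^t\le\|e_t\|_2\le2\|v\|_2r^t$, where the paper only says they follow ``as in'' the theorem; this is a sensible precaution, since that theorem assumes $0<\omega<\pi$ and so does not cover $\omega=\pi$.
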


\begin{proof}
Since $\rho(A)<r$, the matrix $-rI-A$ is invertible.
Define
\begin{equation}
v=(-rI-A)^{-1}bu_0.
\end{equation}
Then $v\neq0$ and $-rv-Av=bu_0$.
Direct substitution into Eq.~\ref{eq:alternating-coupled}
gives
\begin{equation}
e_t=A^t(e_0-v)+(-r)^tv.
\label{eq:alternating-decomposition}
\end{equation}
The matrix-power bound implies
$A^t(e_0-v)=o(r^t)$.
Therefore,
\begin{equation}
r^{-t}e_t=(-1)^tv+o(1),
\end{equation}
which proves Eq.~\ref{eq:alternating-amplitude}.

For $r<1$, the norm is consequently bounded above and below
by positive multiples of $r^t$ for all sufficiently large
$t$.
The exponential-rate and settling-time conclusions follow
as in Theorem~\ref{thm:oscillatory-coupling}.

For $r=1$, Eq.~\ref{eq:alternating-decomposition} gives
\begin{equation}
e_{2t}\longrightarrow v,\qquad e_{2t+1}\longrightarrow-v.
\end{equation}
Thus
\begin{equation}
\mathcal E(e)=\inf_{c\in\mathbb R^d}\max\{\|v-c\|_2,\|-v-c\|_2\}=\|v\|_2.
\end{equation}
The last equality follows because any ball containing
$v$ and $-v$ has radius at least $\|v\|_2$, attained at
$c=0$.
Since $v=-(I+A)^{-1}bu_0$ when $r=1$, this proves
Eq.~\ref{eq:alternating-floor}.
\end{proof}

\paragraph{Perturbation response}
In the alternating case $S=-r$, the Sylvester equation gives
\[
P(-r)-AP=b,
\qquad
P=(-rI-A)^{-1}b.
\]
Hence the linearized complementary trajectory is
\[
\widehat e_k^{(\alpha)}
=
A^k e_0+\alpha\bigl(P(-r)^k-A^kP\bigr)u_0.
\]
Subtracting the $\alpha=0$ counterfactual isolates the response induced by the oscillatory coordinate,
\[
\widehat{\Delta e}_k^{(\alpha)}
:=
\widehat e_k^{(\alpha)}-\widehat e_k^{(0)}
=
\alpha\bigl(P(-r)^k-A^kP\bigr)u_0.
\]
Since $\rho(A)<r$, we have $A^kPu_0=o(r^k)$, and therefore
\[
\widehat{\Delta e}_k^{(\alpha)}
=
\alpha P(-r)^k u_0+o(r^k).
\]

\subsubsection{Structural non-forcing of gauge motion.}
\label{app:gauge_nonforcing}

The quotient construction in Section~\ref{sec:quotient_definition} is stronger than readout invariance, exact equivariance prevents symmetry-generated motion from forcing dynamics transverse to the gauge orbit.

\begin{restatable}[Gauge-to-quotient non-forcing]{proposition}{gaugenonforcing}
\label{prop:gauge-nonforcing}
Let $T$ be differentiable and equivariant under the token-local
$G=U(1)^S$ action. Define the gauge tangent space
$\mathcal V_z:=\operatorname{Tan}_z(G\cdot z)$, and let $\Pi_z$ denote the orthogonal
projector onto $\mathcal V_z^\perp$. Then
\begin{equation}
\Pi_{T(z)}DT(z)\xi=0,
\qquad
\xi\in\mathcal V_z.
\end{equation}
At a relative equilibrium $T(z^\star)=g^\star\cdot z^\star$,
define $\widehat T=(g^\star)^{-1}\cdot T$ and
$\widehat J=D\widehat T(z^\star)$. If $U_G$ has orthonormal
columns spanning $\mathcal{V}*{z^\star}$, then
\begin{equation}
\widehat J U_G=U_G,
\qquad
B_G:=\Pi*{z^\star}\widehat J U_G=0.
\label{eq:gauge-nonforcing-root}
\end{equation}
\end{restatable}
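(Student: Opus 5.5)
Differentiating the equivariance identity along one-parameter subgroups gives everything. For each token $s$ and each $\phi\in\mathbb R$, let $g_\phi^{(s)}$ rotate only token $s$ by $e^{i\phi}$. Differentiating $T(g_\phi^{(s)}\cdot z)=g_\phi^{(s)}\cdot T(z)$ at $\phi=0$ and applying the chain rule gives
\[
DT(z)\,\mathcal J_s z=\mathcal J_s\,T(z),
\]
where $\mathcal J_s$ is the real-coordinate generator of the phase rotation of token $s$, i.e.\ multiplication by $i$ on $(h_s,m_s)$ and zero elsewhere. The vectors $\{\mathcal J_s z\}_{s=1}^S$ span $\mathcal V_z=\operatorname{Tan}_z(G\cdot z)$, since the orbit map $\phi\mapsto g_\phi\cdot z$ has differential $\phi\mapsto\sum_s\phi_s\mathcal J_s z$. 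By linearity, every $\xi\in\mathcal V_z$ is sent by $DT(z)$ to a combination of the vectors $\mathcal J_s T(z)$, so $DT(z)\xi\in\mathcal V_{T(z)}$. Applying $\Pi_{T(z)}$ annihilates it, which proves the first claim.

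For the relative-equilibrium part, I will use that $G$ is abelian and acts linearly. Then $\widehat T=(g^\star)^{-1}\cdot T$ is again $G$-equivariant, because $(g^\star)^{-1}$ commutes with every $g\in G$. It also satisfies $\widehat T(z^\star)=z^\star$. Applying the derivative identity to $\widehat T$ at $z^\star$ gives
\[
\widehat J\,\mathcal J_s z^\star=\mathcal J_s\widehat T(z^\star)=\mathcal J_s z^\star .
\]
Hence $\widehat J$ acts as the identity on $\mathcal V_{z^\star}$. Writing $U_G=[\mathcal J_1z^\star,\dots,\mathcal J_Sz^\star]\,R$ for a change-of-basis matrix $R$ (orthonormalization), we get $\widehat JU_G=U_G$. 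Then $B_G=\Pi_{z^\star}U_G=0$, because the columns of $U_G$ lie in $\mathcal V_{z^\star}$.

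The main obstacle is the spanning and orthonormal-basis bookkeeping. If some token has $h_s=m_s=0$, then $\mathcal J_s z^\star=0$, so $\mathcal V_{z^\star}$ has dimension smaller than $S$. I will handle this by taking $U_G$ to be an orthonormal basis of the span of the nonzero generators; the argument above goes through unchanged, since it only uses that each column of $U_G$ is a linear combination of the $\mathcal J_s z^\star$. Two further routine points need checking. First, the gauge tangent at $T(z)$ is $\operatorname{span}\{\mathcal J_sT(z)\}$ even when $T(z)$ has different zero tokens; in that case $\mathcal J_sT(z)=0$ and that term lies in $\mathcal V_{T(z)}$ trivially. Second, in the statement, $\mathcal V*{z^\star}$ and $\Pi*{z^\star}$ should be read as $\mathcal V_{z^\star}$ and $\Pi_{z^\star}$. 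Finally, I will note the consequence: in the block decomposition Eq.~\ref{eq:local-block-dynamics} applied to $\widehat J$ with $U=U_G$, the coupling term vanishes. Lemma~\ref{lem:coupled-response} then gives $P=0$, so gauge motion transmits nothing to the quotient complement.
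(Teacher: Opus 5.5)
Your proof is correct and follows the paper's argument: you differentiate the equivariance identity along curves in $G$ through the identity to get $DT(z)\mathcal V_z\subseteq\mathcal V_{T(z)}$, use commutativity of $G$ to make $\widehat T$ equivariant with $\widehat T(z^\star)=z^\star$, and conclude that $\widehat J$ is the identity on $\mathcal V_{z^\star}$, so $B_G=\Pi_{z^\star}U_G=0$; the only difference is that you differentiate along the per-token generators $\mathcal J_s$ and extend by linearity, whereas the paper takes one generating curve for each tangent vector $\xi$. One caution on your closing aside: $\operatorname{span}(U_G)$ is right-invariant ($\widehat JU_G=U_G$) rather than left-invariant, so in these coordinates it is the quotient part that evolves autonomously, $e^+=\Pi_{z^\star}\widehat J\Pi_{z^\star}e$, while $u$ need not be autonomous; the decoupling follows directly from $\Pi_{z^\star}\widehat JU_G=0$, without Lemma~\ref{lem:coupled-response}, whose hypotheses ($u_{t+1}=Su_t$ and $1\notin\sigma(A)$) are not guaranteed here.
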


\begin{proof}
Let $\xi\in\mathcal{V}_z$. By definition of the tangent space to the
group orbit, there exists a differentiable curve $g(\phi)\in G$ with
$g(0)=e$ such that
\begin{equation}
\xi
=
\left.
\frac{d}{d\phi}
g(\phi)\cdot z
\right|_{\phi=0}.
\end{equation}
By equivariance,
\begin{equation}
T(g(\phi)\cdot z)
=
g(\phi)\cdot T(z).
\end{equation}
Differentiating at $\phi=0$ gives
\begin{equation}
DT(z)\xi
=
\left.
\frac{d}{d\phi}
g(\phi)\cdot T(z)
\right|_{\phi=0}
\in
\mathcal{V}_{T(z)}.
\end{equation}
Hence
\begin{equation}
DT(z)\mathcal{V}_z
\subseteq
\mathcal{V}_{T(z)},
\end{equation}
and therefore
\begin{equation}
\Pi_{T(z)}DT(z)\xi=0,
\qquad
\xi\in\mathcal{V}_z.
\end{equation}

Now let $z^\star$ satisfy
$T(z^\star)=g^\star\cdot z^\star$, and define
\begin{equation}
\widehat T=(g^\star)^{-1}\cdot T.
\end{equation}
Then $\widehat T(z^\star)=z^\star$. Since $G=U(1)^S$ is Abelian, $gg^\star=g^\star g$ for all
$g,g^\star\in G$, and hence
\begin{align}
\widehat T(g\cdot z)
&=
(g^\star)^{-1}\cdot T(g\cdot z)
\\
&=
(g^\star)^{-1}g\cdot T(z)
\\
&=
g(g^\star)^{-1}\cdot T(z)
=
g\cdot\widehat T(z).
\end{align}
Thus $\widehat T$ is also $G$-equivariant.

Thus, for any
$\xi\in\mathcal{V}_{z^\star}$ and any curve $g(\phi)$ generating $\xi$,
\begin{align}
\widehat J\xi
&=
\left.
\frac{d}{d\phi}
\widehat T(g(\phi)\cdot z^\star)
\right|_{\phi=0}
\\
&=
\left.
\frac{d}{d\phi}
g(\phi)\cdot\widehat T(z^\star)
\right|_{\phi=0}
\\
&=
\left.
\frac{d}{d\phi}
g(\phi)\cdot z^\star
\right|_{\phi=0}
=
\xi.
\end{align}
Therefore $\widehat J$ acts as the identity on
$\mathcal{V}_{z^\star}$. If $U_G$ spans this space,
\begin{equation}
\widehat J U_G=U_G.
\end{equation}
Thus, in the co-moving frame, while the raw state advances along the gauge orbit by $g^\star$, removing this common
rotation leaves infinitesimal gauge perturbations unchanged to first order.

In direct analogy with the coupling matrix
$B=QJU$ in Section~\ref{sec:theory}, define the coupling from the gauge coordinate
into the transverse quotient dynamics as
\begin{equation}
B_G
:=
\Pi_{z^\star}\widehat J U_G.
\end{equation}
Since $\Pi_{z^\star}U_G=0$ and $\widehat J U_G=U_G$,
\begin{equation}
B_G
=
\Pi_{z^\star}\widehat J U_G
=
\Pi_{z^\star}U_G
=
0.
\end{equation}

Finally, let $W$ be an orthonormal basis of
$\mathcal{V}_{z^\star}^{\perp}$ and write
$\delta=We+U_Gu$. Projecting the co-moving linearization onto the
quotient coordinates gives
\begin{align}
e^{+}
&=
W^\top\widehat J(We+U_Gu)
\\
&=
W^\top\widehat JW\,e
+
W^\top\widehat JU_G\,u
\\
&=
A_Qe,
\end{align}
where $A_Q:=W^\top\widehat JW$. Thus the quotient component evolves
independently of the gauge coordinate.
\end{proof}

To connect this result to the coupled-mode analysis in
Section~\ref{sec:oscillatory_coupling}, let $W$ be an orthonormal basis of
$\mathcal{V}_{z^\star}^{\perp}$ and write
$\delta=We+U_Gu$.
Since $W^\top\widehat J U_G=0$, the quotient component evolves locally as

$$
e_{k+1}=A_Qe_k,
\qquad
A_Q:=W^\top\widehat J W,
$$

independently of the gauge coordinate $u_k$.
Hence a symmetry-generated gauge rotation cannot transmit its timescale into the quotient dynamics.
This statement applies only to the gauge subspace so  other oscillatory modes transverse to the gauge orbit remain unconstrained and may continue to participate in the quotient dynamics.

\subsection{Local theory and empirical prediction}~\label{app:thm2}

\paragraph{Derivation of the local block dynamics.}~\label{app:localblock}

Let $U\in\mathbb R^{D\times m}$ have orthonormal columns and satisfy
\begin{equation}
J^\top U = US^\top.
\label{eq:left-invariant-subspace}
\end{equation}
Transposing gives
\begin{equation}
U^\top J = S U^\top.
\label{eq:left-invariant-transpose}
\end{equation}
Define
\begin{equation}
Q=I-UU^\top.
\end{equation}
Since $U^\top U=I$, $Q$ is the orthogonal projector onto the complement of
$\operatorname{span}(U)$, and
\begin{equation}
QU=0,
\qquad
U^\top Q=0.
\label{eq:projector-identities}
\end{equation}

For any root-centered displacement $\delta$, define
\begin{equation}
u=U^\top\delta,
\qquad
e=Q\delta.
\end{equation}
Then $\delta=e+Uu$, with $U^\top e=0$. Under the linearized dynamics
\begin{equation}
\delta^+=J\delta,
\end{equation}
the oscillatory coordinate evolves as
\begin{align}
u^+
&=U^\top\delta^+ \\
&=U^\top J(e+Uu) \\
&=S U^\top e + S U^\top Uu \\
&=Su.
\label{eq:local-u-dynamics}
\end{align}
Thus the complementary coordinate does not feed back into $u$ to first order.

Similarly,
\begin{align}
e^+
&=Q\delta^+ \\
&=QJ(e+Uu) \\
&=QJQ\,e+QJU\,u,
\end{align}
where we used $e=Qe$. Defining
\begin{equation}
A:=QJQ,
\qquad
B:=QJU,
\label{eq:local-AB}
\end{equation}
gives
\begin{equation}
e^+=Ae+Bu,
\qquad
u^+=Su.
\label{eq:local-triangular-system}
\end{equation}
Hence the left-invariant oscillatory coordinate evolves autonomously, while it can
still drive the complementary state through the coupling block $B$.

\subsection{QRR Architecture}
\label{app:full_arc}

QRR follows the recurrent-block organization of FPRM, but replaces the recurrent trunk with the gauge-equivariant construction described in Section~\ref{sec:gauge_architecture} and augments recurrence with momentum.

\paragraph{Input and recurrent state.}
Let $x$ denote the embedded task input, which is kept fixed and re-injected at every recurrent step.
The recurrent state consists of a latent state $y_t$ and a momentum state $m_t$, both transforming covariantly under the token-local $G=U(1)^S$ action.
In real coordinates, $y_t,m_t\in\mathbb{R}^{S\times 2d}$, with the fixed identification
$\mathbb{R}^{2d}\simeq\mathbb{C}^{d}$ used throughout the gauge-equivariant trunk.
For each input, we initialize $m_0=0$ and draw $y_0$ independently at random.
Specifically, each four-dimensional real cell of $y_0$ is sampled coordinate-wise from a truncated normal distribution and normalized to unit norm.

\paragraph{Gauge-equivariant recurrent trunk.}
At recurrent step $t$, the trunk receives the current state, carried momentum, and input,
\begin{equation}
z_t
=
\Phi_\theta(y_t,m_t;x),
\label{eq:full-qrr-trunk}
\end{equation}
where $\Phi_\theta$ is $G$-equivariant,
\begin{equation}
\Phi_\theta(g\!\cdot y,g\!\cdot m;x)
=
g\!\cdot\Phi_\theta(y,m;x).
\end{equation}
The trunk retains the pre-norm Transformer structure of FPRM, while token mixing, input injection, attention, and feed-forward transformations are replaced by their gauge-equivariant counterparts.
The input $x$ is injected through a covariant carrier, while the carried momentum $m_t$ is also fed back into the trunk.
The resulting output $z_t$ is a \emph{state proposal}, rather than the next recurrent state directly.

\paragraph{Momentum-carrying update.}
We define the proposal residual
\begin{equation}
F_t=z_t-y_t.
\label{eq:full-qrr-residual}
\end{equation}
The deployed recurrence applies momentum directly to this residual,
\begin{equation}
p_t
=
\beta m_t+(1-\beta)F_t,
\qquad
\beta=\sigma(b_{\mathrm H}),
\label{eq:full-qrr-momentum}
\end{equation}
where $b_{\mathrm H}$ is a learned global scalar.
Ignoring an equivalent gauge rephasing described below, the update is
\begin{equation}
\widetilde y_{t+1}=y_t+p_t,
\qquad
\widetilde m_{t+1}=p_t.
\label{eq:full-qrr-update}
\end{equation}

\paragraph{Gauge rephasing.}
The implementation additionally permits a state-dependent gauge rotation
\begin{equation}
a_t
=
\frac{\langle F_t,\mathcal{J}y_t\rangle}
{\|y_t\|^2+\epsilon},
\qquad
R_t=\exp(a_t\mathcal{J}),
\label{eq:full-qrr-rotation}
\end{equation}
with $a_t$ detached from the computational graph.
The stored recurrent state is then
\begin{equation}
y_{t+1}=R_t\widetilde y_{t+1},
\qquad
m_{t+1}=R_t\widetilde m_{t+1}.
\label{eq:full-qrr-rotated-update}
\end{equation}
Since $R_t$ is itself a gauge transformation, this operation doesn't change quotient state.

\paragraph{Readout and adaptive halting.}
Predictions are produced by a gauge-invariant readout
\begin{equation}
\widehat y_t
=
\mathcal{R}(y_t;x),
\qquad
\mathcal{R}(g\!\cdot y_t;x)
=
\mathcal{R}(y_t;x).
\end{equation}
Consequently, prediction is insensitive to motion confined to the gauge orbit.
Adaptive halting is based on the quotient residual $r_G$ from Equation~\ref{eq:gauge_residual}, computed from the joint state $(y_t,m_t)$ after phase alignment.
A trajectory halts once this residual remains below the task-specific threshold for the prescribed persistence window; otherwise recurrence continues until the rollout cap.

\paragraph{Recurrent training.}
The pair $(y_t,m_t)$ is carried across recurrent steps and across successive deep-supervision segments for the same input, and is reset only when a new example is started.
Within each segment, the model is unrolled for a task-specific number of recurrent steps, while gradients are propagated only through the final truncated-BPTT window. Earlier steps advance the recurrent state without storing gradients.
After each deep-supervision segment, the prediction loss is applied and the carried state is detached before the next segment.

\subsection{Additional Experimental Results}
\label{app:add_exps}


\subsubsection{Causal verification on HRM spiral dynamics}
\label{app:hrm-spiral-validation}

We additionally verify the local oscillatory-coupling prediction on HRM, whose
solution-root dynamics exhibit a qualitatively different oscillatory geometry from
the alternating mode analyzed for FPRM in Section~\ref{sec:theory}.
Here, the identified slow spectral component is a two-dimensional real invariant
block
\begin{equation}
S\in\mathbb{R}^{2\times2},
\qquad
\sigma(S)=\{re^{\pm i\omega}\},
\qquad
0<\omega<\pi,
\end{equation}
with $\rho(A)<r\le1$.
Thus, rather than alternating along a single direction, the slow coordinate rotates
while decaying toward the solution root.
We use exactly the same intervention protocol used for FPRM in main paper.
The local linearization from Eq.~\ref{eq:general-response-prediction} predicts
\begin{equation}
\widehat{\Delta e}_k^{(\alpha)}
=
\alpha QJ^kUu_0
=
\alpha\left(PS^k-A^kP\right)u_0,
\label{eq:hrm-response-prediction}
\end{equation}
where the full matrix power $S^k$ retains the rotation of the two-dimensional spiral mode.

\begin{figure*}[h]
\vspace{-0.5em}
    \centering
    \small
    \includegraphics[width=0.92\textwidth]{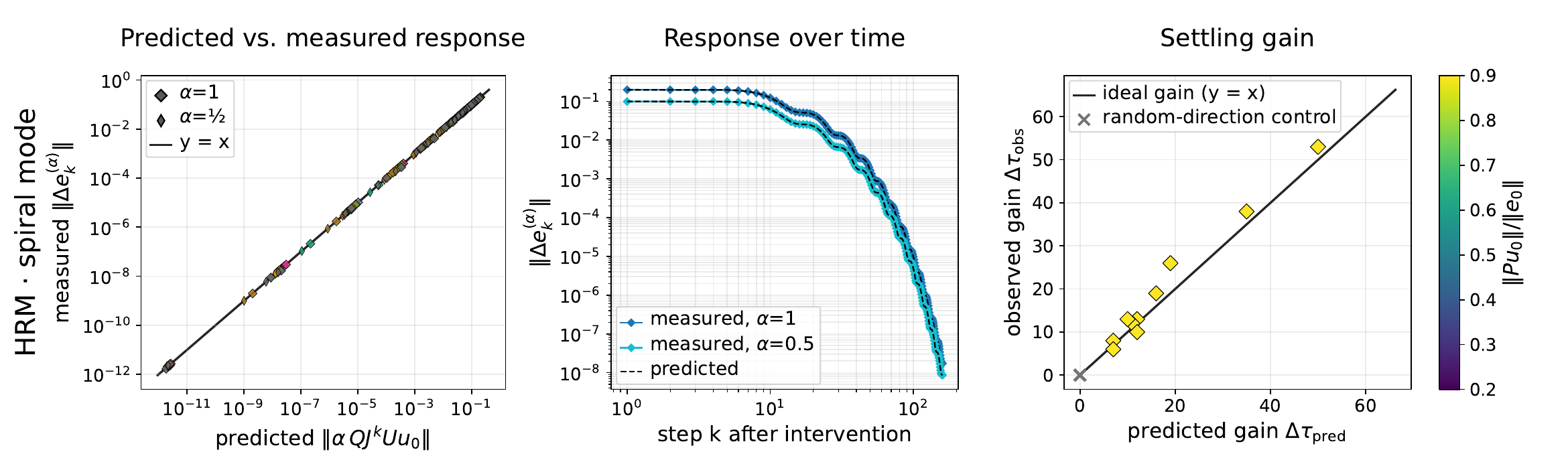}
    \vspace{-0.7em}
    \caption{
    \textbf{The local oscillatory-coupling prediction extends to the two-dimensional spiral dynamics of HRM.}
    We modify only the Jacobian-identified oscillatory coordinate once with
    $\alpha\in\{0,0.5,1\}$ and subsequently resume the original nonlinear HRM
    recurrence.
    }
    \label{fig:theoryfit-hrm}
\end{figure*}

Figure~\ref{fig:theoryfit-hrm} mirrors the three tests reported for FPRM in
Figure~\ref{fig:theoryfit}.
First, the response measured from the nonlinear rollout agrees with the
Jacobian-based prediction
\begin{equation}
\left\|
Q\!\left(
z_{t_0+k}^{(\alpha)}-z_{t_0+k}^{(0)}
\right)
\right\|_2
\qquad\text{vs.}\qquad
\left\|
\alpha QJ^kUu_0
\right\|_2.
\end{equation}
Second, the temporal response follows the predicted rotating and decaying evolution, rather than the one-dimensional alternation
observed in FPRM.
Finally, removing the oscillatory component with $\alpha=0$ reduces root-relative
settling time by an amount consistent with the local prediction, using the same
settling criterion and sustained-window implementation as in the main-text analysis.
These results show that the intervention agreement in
Figure~\ref{fig:theoryfit} is not specific to the nearly one-dimensional flip geometry of FPRM. The same local decomposition predicts the nonlinear response and convergence cost for a qualitatively distinct two-dimensional spiral mode.

\subsubsection{Spectral Analysis on HRM vs.\ QRR}
\label{app:spectral_bridge}

\begin{figure}[h]
    \centering
    \includegraphics[width=\linewidth]{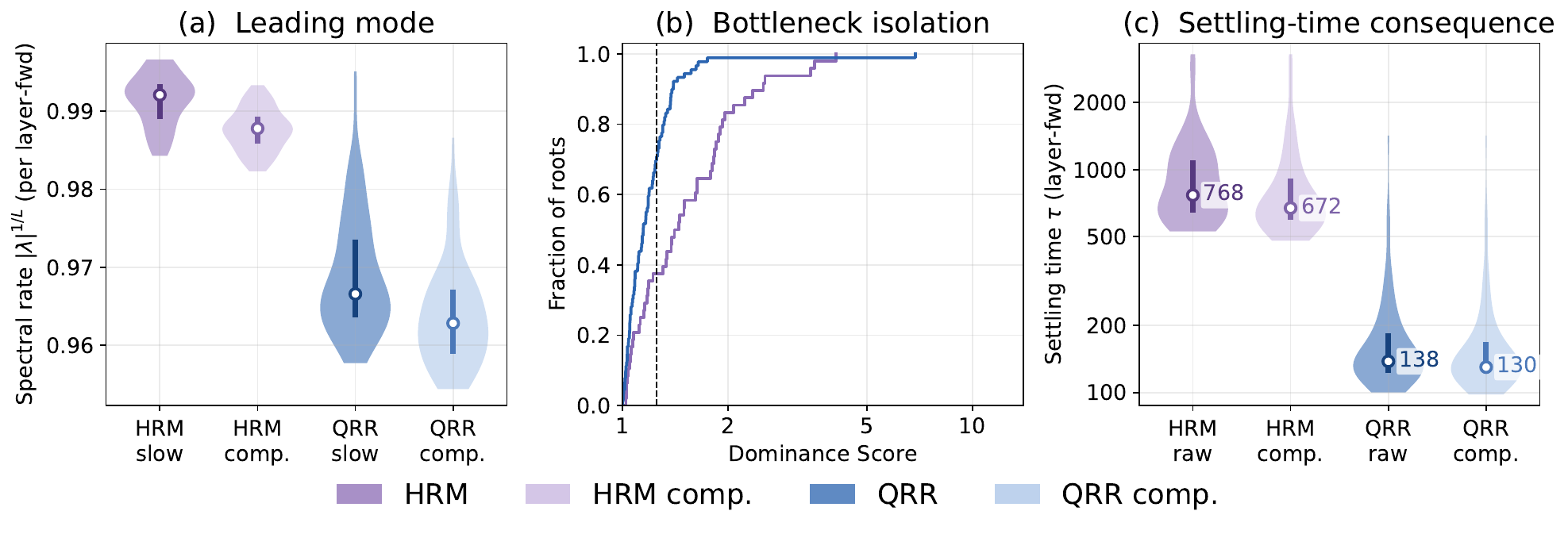}
    \caption{
    \textbf{Spectral organization and settling in HRM and QRR.}
    (a) Leading and complementary spectral rates near certified solution equilibria.
    QRR rates are measured in quotient space.
    (b) Distribution of the dominance score
    $D=\log r_{\mathrm{comp}}/\log r_{\mathrm{lead}}$.
    (c) Root-relative settling time before and after factoring out the leading-mode response.
    }
    \label{fig:spectral_story_hrm_qrr}
\end{figure}

We repeat the spectral analysis of Section~\ref{sec:spectral_bridge} for HRM, whose local dynamics exhibit the spiral geometry studied in Appendix~\ref{app:hrm-spiral-validation}. For HRM, we linearize the recurrent H-cycle map around certified solution fixed points. For QRR, we analyze the corresponding linearization in quotient space around relative equilibria. As in the main analysis, we decompose each system into its leading spectral mode and complementary dynamics and normalize spectral rates to recurrent layer-forward.

Figure~\ref{fig:spectral_story_hrm_qrr} shows the same distinction at a different absolute timescale. Although the leading--complementary spectral gaps appear similar in magnitude, HRM operates much closer to unit spectral rate, producing a larger separation in decay timescales and a stronger dominance score than QRR. This is reflected in finite-time settlement. Factoring out the leading-mode response reduces the median HRM settling time from $768$ to $672$ layer-forwards, whereas the corresponding change in the QRR quotient is only $138$ to $130$. Thus, QRR remains in a substantially faster regime and is less dependent on its leading mode for settlement, even relative to the qualitatively different spiral dynamics of HRM.

\subsubsection{Spectral Properties of Leading Oscillatory Modes}
~\label{app:empvalrho}

\vspace{-2em}
For each certified solution root, we linearize the one-step recurrent map and identify
the leading spectral mode of the local Jacobian $J$. We retain roots for which this
mode is oscillatory, corresponding to either a complex-conjugate pair or a real negative
flip mode. Let $U$ span this leading block, with modulus $r$, and define its complement
by $Q=I-UU^\top$ and $A=QJQ$. Because the selected block contains the leading
eigenmode of $J$, $r\geq\rho(A)$ holds by construction. We therefore use
$r-\rho(A)$ to measure the spectral separation between the leading oscillatory mode
and the remaining dynamics.

\begin{figure}[h]
    \centering
    \includegraphics[width=\linewidth]{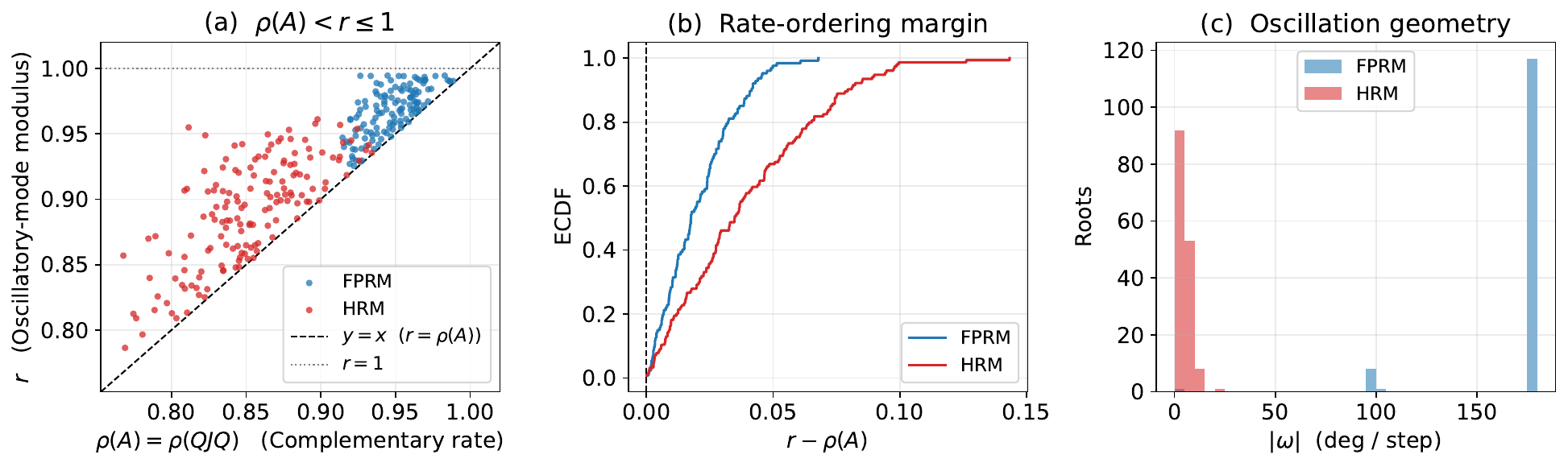}
    \caption{
    \textbf{Spectral separation of leading oscillatory modes.}
    {(a)} Leading-mode modulus $r$ and complementary spectral radius $\rho(A)$.
    {(b)} Spectral margin $r-\rho(A)$.
    {(c)} Oscillation angle $|\omega|$ for FPRM and HRM.
    }
    \label{fig:thm32-rate}
    \vspace{-0.7em}
\end{figure}

Figure~\ref{fig:thm32-rate} shows a strict spectral gap for all selected roots.
The median margin $r-\rho(A)$ is $0.0176$ for FPRM and $0.035$ for HRM,
with minimum margins of $2.2\times10^{-4}$ and $1.5\times10^{-4}$.
The leading modes also exhibit distinct geometries. FPRM is concentrated near
$|\omega|=\pi$ with median $179.7^\circ$, whereas HRM exhibits small-angle spirals
with median $3.9^\circ$. Thus, the selected oscillatory modes are cleanly separated
from the complementary spectrum across both flip and spiral regimes.

\subsubsection{FPRM flip mode suppression}
\label{app:fprm-osc-suppression}

The local analysis in Section~\ref{sec:oscillatory_coupling} shows that the slowly decaying flip mode can limit late-stage convergence in FPRM.
This does not imply, however, that the same mode is dispensable throughout reasoning.
We therefore test whether persistently suppressing the alternating component before the output has stabilized interferes with the computation itself.

For each trajectory, we estimate a one-dimensional alternating direction $U$ from the preceding recurrent updates and, from an intervention time $t_0$ onward, modify every subsequent update $d_t$ as
\begin{equation}
z_{t+1}^{(\alpha)}
=
z_t+
\left[I-(1-\alpha)UU^\top\right]d_t .
\label{eq:persistent-flip-suppression}
\end{equation}
We use $\alpha=\tfrac12$ as the suppression intervention and compare against the unmodified rollout.
Early interventions start before the native output-stabilization (decoded output stabilizes and remains equal for the remainder of the rollout), whereas late interventions start after stabilization.
To distinguish suppression of the oscillatory mode from a generic reduction in update magnitude, we additionally suppress a non-oscillatory control direction while matching the removed energy.
All comparisons are paired against the corresponding original rollout, and confidence intervals are obtained by bootstrapping over boards.

\begin{table}[h]
\centering
\small
\setlength{\tabcolsep}{5pt}
\renewcommand{\arraystretch}{1.08}
\caption{
\textbf{Persistent suppression of the FPRM flip mode.}
Suppressing the alternating component before output stabilization reduces final exact accuracy, whereas an energy-matched non-oscillatory intervention does not show a significant effect.
}
\label{tab:fprm-osc-suppression}
\begin{tabular}{lcc}
\toprule
{Intervention}
& $\boldsymbol{\Delta}$ {final accuracy}
& {95\% CI} \\
\midrule
Oscillatory suppression, early
& {$-5.9$ pp}
& $[-13.1,-0.6]$ \\

Energy-matched control, early
& $-1.1$ pp
& $[-2.8,0.0]$ \\

Oscillatory suppression, late
& $0.0$ pp
& $[0.0,0.0]$ \\
\bottomrule
\end{tabular}
\end{table}

Early suppression decreases final exact accuracy by $5.9$ percentage points, and delays output stabilization.
In contrast, removing a comparable amount of update energy along a non-oscillatory direction produces no statistically distinguishable degradation, and suppressing the flip mode only after the output has stabilized has no measurable effect.
These results indicate that the oscillatory motion of the same alternating form can still participate in the earlier reasoning computation.
This motivates QRR to avoid suppressing oscillations based on their dynamical appearance alone, and instead quotient out only motion whose representational redundancy is guaranteed by symmetry.

\subsubsection{Halting time sensitivity to task difficulty}
\label{app:difficulty_halting}

\begin{figure}[h]
    \vspace{-0.5em}
    \centering
    \includegraphics[width=0.9\linewidth]{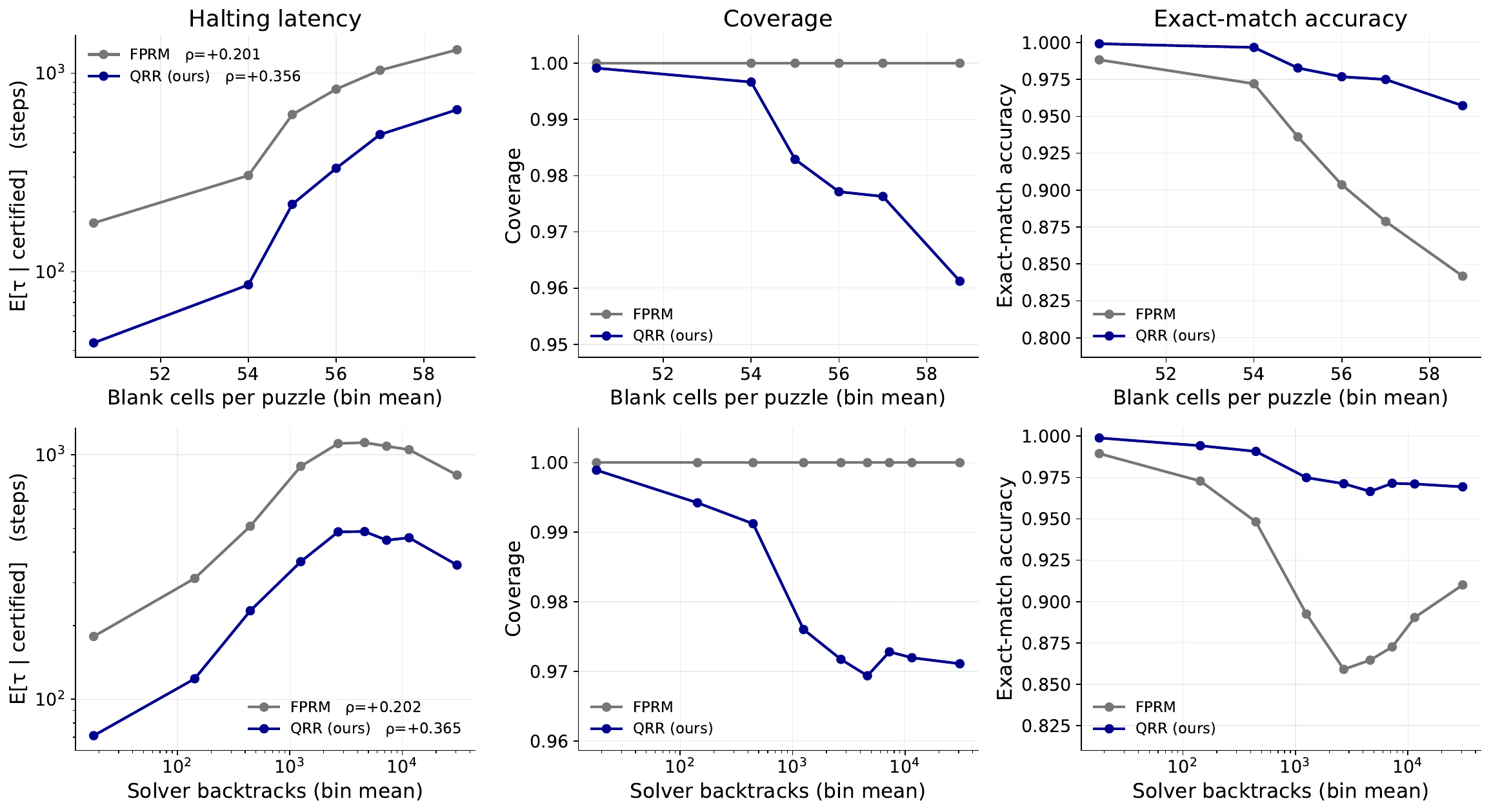}
    \vspace{-0.2em}
    \caption{
    \textbf{Adaptive halting as a function of Sudoku difficulty.}
    Difficulty as the number of blank cells (top) and the backtrack count (bottom).
    Left: mean halting time among certified examples.
    Middle: fraction of examples certified within the rollout budget.
    Right: exact-match accuracy.
    }
    \label{fig:difficulty-halting}
    \vspace{-1em}
\end{figure}

On $42{,}023$ held-out Sudoku-Extreme boards, we evaluate QRR and FPRM under the same halting rules and $35{,}000$-step cap as in Table~\ref{tab:convergence_halting}. As shown in Figure~\ref{fig:difficulty-halting}, halting latency increases with difficulty for both models, but more strongly for QRR. The Spearman correlation is $\rho=0.356$ versus $0.201$ for blank count and $0.365$ versus $0.202$ for solver backtracks. QRR nevertheless certifies significantly earlier across the difficulty range while maintaining $98.0\%$ overall coverage and consistently higher exact-match accuracy, indicating stronger difficulty-adaptive compute allocation with lower cost.

\subsection{Experimental Setup}
\label{app:exp_setup}

\subsubsection{Reasoning Tasks}
\label{app:datasets}

We evaluate on four reasoning benchmarks. \textbf{Sudoku-Extreme} consists of partially filled $9\times9$ Sudoku boards with unique solutions and is evaluated by exact-match accuracy of the completed grid~\citep{wang2025hierarchicalreasoningmodel}. \textbf{Maze-Hard} contains $30\times30$ shortest-path problems and is evaluated by exact match to the supervised solution path~\citep{wang2025hierarchicalreasoningmodel}. Because multiple optimal paths may exist in Maze-Hard, we additionally use \textbf{Maze-Unique}, constructed from perfect tree-structured mazes so that each start--goal pair has a unique solution~\citep{MiniARC_NeurIPSW2022,huang2026equilibriumreasonerslearningattractors}. \textbf{Mini-ARC} is a compact $5\times5$ abstract rule-induction benchmark in which the model infers a transformation from demonstration pairs and applies it to a held-out input~\citep{MiniARC_NeurIPSW2022,huang2026equilibriumreasonerslearningattractors}. We follow the standard augmentation-based evaluation, map predictions from each augmented view back to the original frame, aggregate identical candidate grids by voting, and report task-level Pass@1 from the highest-ranked prediction.

\subsubsection{Root convergence analysis}
\label{app:rootconvanalys}

Here we provide detailed protocol for root convergence analysis used in this paper, which is used for the local Jacobian and spectral analyses in Sections~\ref{sec:oscillatory_coupling}, ~\ref{sec:spectral_bridge}, and Table~\ref{tab:convergence_halting} (a).
We firstly identify solution equilibria using Anderson acceleration with history $m=6$,
Tikhonov regularization $10^{-8}$, and $\beta=1$ in fp64.
The solver runs for a fixed budget and returns the iterate with the smallest residual.
For FPRM, HRM, and EqR, the solved maps are one FPRM recurrent step, two HRM
H-cycles, and three EqR cycles, respectively. For QRR, Anderson acceleration is
applied to the gauge-aligned map. 

For each certified solution root, we run the original model from its native initialization and measure when the trajectory reaches the corresponding root. We define $\tau_{\mathrm{root}}$ as the first step after which the root-relative distance remains below $\delta=10^{-3}$ for a persistence window of approximately $96$ recurrent layer-forwards. This corresponds to $48$ native steps for FPRM and QRR and $4$ trajectory steps for HRM. For EqR, whose native step contains $42$ layer-forwards, we use two steps, corresponding to $84$ layer-forwards. \emph{Correct root found} denotes the fraction of evaluation examples for which the solver returns a certified solution equilibrium decoding to the ground truth. 

To compare convergence across architectures with different computation per native recurrent step, we report convergence times in recurrent layer-forward equivalents (\textsc{lfwd}). An \textsc{lfwd} counts one forward evaluation of one recurrent network layer. If one native step of model $m$ evaluates $\ell_m$ recurrent layers, then
\[
\tau_{\mathrm{lfwd}}^{(m)}
=
\ell_m\,\tau_{\mathrm{native}}^{(m)},
\]
where $\tau_{\mathrm{native}}^{(m)}$ is the number of native recurrent steps required to reach the certified solution equilibrium. Table~\ref{tab:lfwd_conversion} summarizes the conversion used for each model.

\begin{table*}[h]
    \centering
    \small
    \caption{Conversion from native recurrent steps to recurrent layer-forward equivalents.
    An \textsc{lfwd} counts one forward evaluation of one recurrent network layer.}
    \label{tab:lfwd_conversion}
    \begin{tabular}{l l r}
        \toprule
        Model & Layer-forward computation per native step & $\ell_m$ (\textsc{lfwd}/step) \\
        \midrule
        QRR
        & $1\times L_{\mathrm{layers}} = 1\times 2$
        & $2$ \\
        FPRM
        & $1\times L_{\mathrm{layers}} = 1\times 2$
        & $2$ \\
        HRM
        & $H_{\mathrm{cyc}}
           \bigl(L_{\mathrm{cyc}}L_{\mathrm{layers}} + H_{\mathrm{layers}}\bigr)
           =2(2\cdot4+4)$
        & $24$ \\
        EqR
        & $H_{\mathrm{cyc}}(L_{\mathrm{cyc}}+1)L_{\mathrm{layers}}
           =3(6+1)2$
        & $42$ \\
        \bottomrule
    \end{tabular}
\end{table*}

\subsubsection{Hyperparameters}~\label{hparams}

We use the same core QRR architecture across all tasks and tune only the
task-specific training, recurrent-compute, and inference settings summarized in
Table~\ref{tab:qrr_hparams}. Architectural details shared across tasks are given in
Section~\ref{app:full_arc}.

\begin{table}[h]
\centering
\small
\setlength{\tabcolsep}{4.5pt}
\renewcommand{\arraystretch}{1.10}
\caption{
\textbf{Task-specific training and inference configurations for QRR.}
The core architecture is shared across tasks. The reported result is from the best in-run evaluation checkpoint.
}
\label{tab:qrr_hparams}

\begin{tabular}{lcccc}
\toprule
\textbf{Setting}
& {Sudoku-Extreme}
& {Maze-Hard}
& {Maze-Unique}
& {Mini-ARC} \\
\midrule

\multicolumn{5}{l}{\textit{Data and optimization}} \\[0.1em]

Training data
& 1K $\times$ 1000 aug.
& 1K, no aug.
& 1K, no aug.
& 149 tasks, 100 aug. \\

Training epochs
& 60K
& 60K
& 6K
& 100K \\

Global batch size
& 768
& 768
& 384
& 768 \\

Learning rate
& $3{\times}10^{-4}$
& $1{\times}10^{-4}$
& $2{\times}10^{-4}$
& $6{\times}10^{-4}$ \\

Weight decay
& $10^{-3}$
& $1.0$
& $1.0$
& $0.1$ \\

Gradient clipping
& $1.0$
& $1.0$
& $1.0$
& $1.0$ \\

\midrule
\multicolumn{5}{l}{\textit{Compute setup}} \\[0.1em]

Training GPUs
& $4\times$ H100
& $4\times$ B200
& $4\times$ H100
& $1\times$ A100 \\

Parallelism
& DDP $\times 4$
& DDP $\times 4$
& DDP $\times 4$
& single GPU \\

Per-GPU batch size
& 192
& 192
& 96
& 768 \\

\midrule
\multicolumn{5}{l}{\textit{Recurrence and deep supervision}} \\[0.1em]

Max DS segment
& 12
& 24
& 24
& 8 \\

Recurrent steps / DS segment
& 6
& 6
& 6
& 6 \\

TBPTT depth
& 6
& 6
& 6
& 6 \\

\midrule
\multicolumn{5}{l}{\textit{Inference and halting}} \\[0.1em]

Halting criterion
& $r_G < 0.03$
& $r_G < 0.075$
& $r_G < 0.15$
& $r_G < 0.08$ \\

Persistence window $W$
& 1
& 1
& 1
& 2 \\

Rollout cap
& 35,000
& 3,000
& 3,000
& 100 \\

Evaluation
& halt-or-cap
& halt-or-cap
& halt-or-cap
& halt-or-cap + vote \\

\bottomrule
\end{tabular}
\end{table}

\end{document}